\def\gD{{\mathcal{D}}}
\def\gG{{\mathcal{G}}}
\def\gL{{\mathcal{L}}}
\def\gN{{\mathcal{N}}}
\def\gS{{\mathcal{S}}}
\def\gT{{\mathcal{T}}}
\def\gX{{\mathcal{X}}}
\def\sA{{\mathbb{A}}}
\newcommand{\E}{\mathbb{E}}
\newcommand{\R}{\mathbb{R}}
\newcommand{\KL}{D_{\mathrm{KL}}}
\newcommand{\BlackBox}{\rule{1.5ex}{1.5ex}}  
    \renewenvironment{proof}{\par\noindent{\bf Proof\ }}{\hfill\BlackBox\\[2mm]}
    \newenvironment{proof}{\par\noindent{\bf Proof\ }}{\hfill\BlackBox\\[2mm]}
\newtheorem{lemma}{Lemma} 
\newtheorem{proposition}{Proposition}
\newcommand{\ie}{{i.e.}}
\newcommand{\termin}{terminating}
\newcommand{\RR}{\mathbb{R}} %
\newcommand{\EE}{\mathbb{E}}
\newcommand{\LLL}{\mathcal{L}}
\newcommand{\LDB}{\LLL_{\rm DB}}
\newcommand{\LTB}{\LLL_{\rm TB}}
\newcommand{\LSTB}{\LLL_{\rm SubTB}}
\newcommand{\ra}{{\rightarrow}}
\newcommand{\algname}[1]{\textsc{#1}} %
\crefname{proposition}{Prop.}{Props.}
\crefname{definition}{Def.}{Defs.}
\crefname{lemma}{Lemma}{Lemmas}
\crefname{example}{Ex.}{Exs.}
\crefname{equation}{}{}
\crefname{section}{\S\hspace{-0.2em}}{\S\S}
\crefname{appendix}{\S\hspace{-0.2em}}{\S\S}
\crefname{subsection}{\S\hspace{-0.2em}}{\S\S}
\crefname{subsubsection}{\S\hspace{-0.2em}}{\S\S}
\crefname{figure}{Fig.}{Figs.}
\crefname{wrapfigure}{Fig.}{Figs.}
\crefname{corollary}{Cor.}{Cors.}
\crefname{table}{Table}{Tables}
\def\section{\@startsection {section}{1}{\z@}{-0.3ex}{0.3ex}{\large\sc\raggedright}}
\def\subsection{\@startsection{subsection}{2}{\z@}{-0.2ex}{0.2ex}{\normalsize\sc\raggedright}}
\def\subsubsection{\@startsection{subsubsection}{3}{\z@}{-0.1ex}{0.1ex}{\normalsize\sc\raggedright}}
\def\paragraph{\@startsection{paragraph}{4}{\z@}{0ex}{-1em}{\normalsize\bf}}
\def\subparagraph{\@startsection{subparagraph}{5}{\z@}{0ex}{-1em}{\normalsize\sc}}
\title{GFlowNets and variational inference}
\author{Nikolay Malkin$^*$, Salem Lahlou$^*$, Tristan Deleu\thanks{Equal contribution. Contact: {\tt nikolay.malkin@mila.quebec}.} \\
Mila, Universit\'e de Montr\'eal
\And
Xu Ji, Edward Hu \\ Mila, Universit\'e de Montr\'eal
\AND
Katie Everett \\ Google Research
\And
Dinghuai Zhang \\ Mila, Universit\'e de Montr\'eal
\And
Yoshua Bengio \\ Mila, Universit\'e de Montr\'eal, CIFAR
}
\begin{document}

\maketitle

\begin{abstract}
This paper builds bridges between two families of probabilistic algorithms: (hierarchical) variational inference (VI), which is typically used to model distributions over continuous spaces, and generative flow networks (GFlowNets), which have been used for distributions over discrete structures such as graphs. We demonstrate that, in certain cases, VI algorithms are equivalent to special cases of GFlowNets in the sense of equality of expected gradients of their learning objectives. We then point out the differences between the two families and show how these differences emerge experimentally. Notably, GFlowNets, which borrow ideas from reinforcement learning, are more amenable than VI to off-policy training without the cost of high gradient variance induced by importance sampling. We argue that this property of GFlowNets can provide advantages for capturing diversity in multimodal target distributions.

Code: \url{https://github.com/GFNOrg/GFN_vs_HVI}.
\end{abstract}

\section{Introduction}

Many probabilistic generative models produce a sample through a sequence of stochastic choices. Non-neural latent variable models \cite[e.g.,][]{blei2003hierarchical}, autoregressive models, hierarchical variational autoencoders \citep{sonderby2016ladder}, and diffusion models \citep{ho2020ddpm} can be said to rely upon a shared principle: richer distributions can be modeled by chaining together a sequence of simple actions, whose conditional distributions are easy to describe, than by performing generation in a single sampling step. When many intermediate sampled variables could generate the same object, making exact likelihood computation intractable, hierarchical models are trained with variational objectives that involve the posterior over the sampling sequence \citep{ranganath2016hierarchical}.

This work connects variational inference (VI) methods for hierarchical models (i.e., sampling through a sequence of choices conditioned on the previous ones) with the emerging area of research on generative flow networks \cite[GFlowNets;][]{bengio2021flow}. GFlowNets have been formulated as a reinforcement learning (RL) algorithm -- with states, actions, and rewards -- that constructs an object by a sequence of actions so as to make the marginal likelihood of producing an object proportional to its reward. While hierarchical VI is typically used for distributions over real-valued objects, GFlowNets have been successful at approximating distributions over discrete structures for which exact sampling is intractable, such as for molecule discovery \citep{bengio2021flow}, for Bayesian posteriors over causal graphs \citep{deleu2022bayesian}, or as an amortized learned sampler for approximate maximum-likelihood training of energy-based models~\citep{zhang2022generative}. Although GFlowNets appear to have different foundations~\citep{bengio2021foundations} and applications than hierarchical VI algorithms, we show here that the two are closely connected.

As our main theoretical contribution, we show that special cases of variational algorithms and GFlowNets coincide in their expected gradients. In particular, hierarchical VI \citep{ranganath2016hierarchical} and nested VI \citep{zimmermann2021nested} are related to the trajectory balance and detailed balance objectives for GFlowNets \citep{malkin2022trajectory,bengio2021foundations}. We also point out the differences between VI and GFlowNets: notably, that GFlowNets automatically perform gradient variance reduction by estimating a marginal quantity (the partition function) that acts as a baseline and allow off-policy learning without the need for reweighted importance sampling.

Our theoretical results are accompanied by experiments that examine what similarities and differences emerge when one applies hierarchical VI algorithms to discrete problems where GFlowNets have been used before. These experiments serve two purposes. First, they supply a missing hierarchical VI baseline for problems where GFlowNets have been used in past work. The relative performance of this baseline illustrates the aforementioned similarities and differences between VI and GFlowNets. Second, the experiments demonstrate the ability of GFlowNets, not shared by hierarchical VI, to learn from off-policy distributions without introducing high gradient variance. We show that this ability to learn with exploratory off-policy sampling is beneficial in discrete probabilistic modeling tasks, especially in cases where the target distribution has many modes.

\section{Theoretical results}
\label{sec:analysis}

\subsection{GFlowNets: Notation and background}
\label{sec:GFN-preliminaries}

We consider the setting of \citet{bengio2021flow}. We are given a pointed\footnote{A pointed DAG is one with a designated initial state.} directed acyclic graph (DAG) $\gG=(\gS, \sA)$, where $\gS$ is a finite set of vertices (\emph{states}), and $\sA \subset \gS\times\gS$ is a set of directed edges (\emph{actions}). If $s\ra s'$ is an action, we say $s$ is a \emph{parent} of $s'$ and $s'$ is a \emph{child} of $s$. There is exactly one state that has no incoming edge, called the \emph{initial state} $s_0 \in \gS$. States that have no outgoing edges are called \emph{\termin{}}. We denote by $\gX$ the set of \termin{} states. A \emph{complete trajectory} is a sequence $\tau = (s_0 \ra \dots \ra s_n)$ such that each $s_i\ra s_{i+1}$ is an action and $s_n \in \gX$. We denote by $\gT$ the set of complete trajectories and by $x_\tau$ the last state of a complete trajectory $\tau$.

GFlowNets are a class of models that amortize the cost of sampling from an intractable target distribution over $\gX$ by learning a functional approximation of the target distribution using its unnormalized density or reward function, $R:\gX \to \RR^+$.
While there exist different parametrizations and loss functions for GFlowNets, they all define a \textit{forward transition probability function}, or a \textit{forward policy}, $P_F(- \mid s)$, which is a distribution over the children of every state $s \in \gS$. The forward policy is typically parametrized by a neural network that takes a representation of $s$ as input and produces the logits of a distribution over its children. Any forward policy $P_F$ induces a distribution over complete trajectories $\tau \in \gT$  (denoted by $P_F$ as well), which in turn defines a marginal distribution over \termin{} states $x \in \gX$ (denoted by $P_F^\top$):
\vspace*{-1mm}\begin{align}
     P_F(\tau=(s_0\ra\dots\ra s_n)) &= \prod_{i=0}^{n-1} P_F(s_{i+1} \mid s_i) & \forall \tau \in \gT \label{eq:P_F_traj},\\
     P_F^\top(x) &= \sum_{\tau \in \gT : x_\tau=x} P_F(\tau) & \forall x \in \gX. \label{eq:P_T}
\vspace*{-1mm}\end{align}
Given a forward policy $P_F$, \termin{} states $x \in \gX$ can be sampled from $P_F^\top$ by sampling trajectories $\tau$ from $P_F(\tau)$ and taking their final states $x_\tau$.

GFlowNets aim to find a forward policy $P_F$ for which $P_F^\top(x)\propto R(x)$. Because the sum in (\ref{eq:P_T}) is typically intractable to compute exactly, training objectives for GFlowNets introduce auxiliary objects into the optimization. For example, the trajectory balance objective \cite[TB;][]{malkin2022trajectory} introduces an auxiliary \textit{backward policy} $P_B$, which is a learned distribution $P_B(- \mid s)$ over the \emph{parents} of every state $s \in \gS$, and an estimated partition function $Z$, typically parametrized as $\exp(\log Z)$ where $\log Z$ is the learned parameter. The TB objective for a complete trajectory $\tau$ is defined as
\vspace*{-1mm}\begin{equation}\vspace*{-1mm}
    \LTB(\tau; P_F, P_B, Z) = \left(\log \frac{Z\cdot P_F(\tau)}{R(x_\tau) P_B(\tau \mid x_\tau)} \right)^2,
    \label{eq:TB}
\end{equation}
where $P_B(\tau \mid x_\tau) = \prod_{(s \ra s') \in \tau} P_B(s \mid s')$.
If $\LTB$ is made equal to 0 for every complete trajectory $\tau$, then $P_F^\top(x)\propto R(x)$ for all $x\in\gX$ and $Z$ is the inverse constant of proportionality: $Z=\sum_{x\in\gX}R(x)$.

The objective (\ref{eq:TB}) is minimized by sampling trajectories $\tau$ from some distribution and making gradient steps on (\ref{eq:TB}) with respect to the parameters of $P_F$, $P_B$, and $\log Z$. The distribution from which $\tau$ is sampled amounts to a choice of scalarization weights for the multi-objective problem of minimizing (\ref{eq:TB}) over all $\tau\in\gT$. If $\tau$ is sampled from $P_F(\tau)$ -- note that this is a nonstationary scalarization -- we say the algorithm runs \emph{on-policy}. If $\tau$ is sampled from another distribution, the algorithm runs \emph{off-policy}; typical choices are to sample $\tau$ from a tempered version of $P_F$ to encourage exploration \citep{bengio2021flow,deleu2022bayesian} or to sample $\tau$ from the backward policy $P_B(\tau|x)$ starting from given \termin{} states $x$ \citep{zhang2022generative}. By analogy with the RL nomenclature, we call the {\em behavior policy} the one that samples $\tau$ for the purpose of obtaining a stochastic gradient, e.g, the gradient of the objective $\LTB$ in \cref{eq:TB} for the sampled $\tau$.

Other objectives have been studied and successfully used in past works, including detailed balance (DB; proposed by \citet{bengio2021foundations} and evaluated by \citet{malkin2022trajectory}) and subtrajectory balance \cite[SubTB;][]{subtb}. 
In the next sections, we will show how the TB objective relates to hierarchical variational objectives. In \Cref{appendix:proposition-extension}, we generalize this result to the SubTB loss, of which both TB and DB are special cases.

\subsection{Hierarchical variational models and GFlowNets}
\label{sec:GFNs-as-HVMs}

Variational methods provide a way of sampling from distributions by means of learning an approximate probability density. Hierarchical variational models \cite[HVMs;][]{ranganath2016hierarchical, sobolev2019importance, vahdat2020nvae, zimmermann2021nested}) typically assume that the sample space is a set of sequences $(z_1, \dots, z_n)$ of fixed length, with an assumption of conditional independence between $z_{i-1}$ and $z_{i+1}$ conditioned on $z_i$, i.e., the likelihood has a factorization $q(z_1,\dots,z_n)=q(z_1)q(z_2|z_1)\dots q(z_n|z_{n-1})$.
The marginal likelihood of $z_n$ in a hierarchical model involves a possibly intractable sum,
\vspace*{-1mm}\begin{equation*}\vspace*{-1mm}
    q(z_n)=\sum_{z_1,\dots,z_{n-1}}q(z_1)q(z_2|z_1)\dots q(z_n|z_{n-1}).
\end{equation*}
The goal of VI algorithms is to find the conditional distributions $q$ that minimize some divergence between the marginal $q(z_n)$ and a target distribution. The target is often given as a distribution with intractable normalization constant: a typical setting is a Bayesian posterior (used in VAEs, variational EM, and other applications), for which we desire $q(z_n)\propto p_{\rm likelihood}(x|z_n)p_{\rm prior}(z_n)$.

\paragraph{The GFlowNet corresponding to a HVM:} Sampling sequences $(z_1,\dots,z_n)$ from a hierarchical model is equivalent to sampling complete trajectories in a certain pointed DAG $\gG$. The states of $\gG$ at a distance of $i$ from the initial state are in bijection with possible values of the variable $z_i$, and the action distribution is given by $q$. Sampling from the HVM is equivalent to sampling trajectories from the policy $P_F(z_{i+1}|z_i)=q(z_{i+1}|z_i)$ (and $P_F(z_1|s_0)=q(z_1)$), and the marginal distribution $q(z_n)$ is the \termin{} distribution $P_F^\top$.

\paragraph{The HVM corresponding to a GFlowNet:} Conversely, suppose $\gG = (\gS, \sA)$ is a graded pointed DAG\footnote{We recall some facts about partially ordered sets. A pointed graded DAG is a pointed DAG in which all complete trajectories have the same length. Pointed graded DAGs $\gG$ are also characterized by the following equivalent property: the state space $\gS$ can be partitioned into disjoint sets $\gS=\bigsqcup_{l=0}^{L} \gS_l$, with $\gS_0 = \{s_0\}$, called \textit{layers}, such that all edges $s \ra s'$ are between states of adjacent layers ($s\in \gS_i$,$s'\in\gS_{i+1}$ for some $i$).} and that a forward policy $P_F$ on $\gG$ is given. Sampling trajectories $\tau=(s_0\ra s_1\ra\dots\ra s_L)$ in $\gG$ is equivalent to sampling from a HVM in which the random variable $z_i$ is the identity of the $(i+1)$-th state $s_i$ in $\tau$ and the conditional distributions $q(z_{i+1}|z_i)$ are given by the forward policy $P_F(s_{i+1}|s_i)$. Specifying an approximation of the target distribution in a hierarchical model with $n$ layers is thus equivalent to specifying a forward policy $P_F$ in a graded DAG.

The correspondence can be extended to non-graded DAGs. Every pointed DAG $\gG = (\gS, \sA)$ can be canonically transformed into a graded pointed DAG by the insertion of dummy states that have one child and one parent. To be precise, every edge $s \ra s' \in \sA$ is replaced with a sequence of $\ell' - \ell(s)$ edges, where $\ell(s)$ is the length of the \emph{longest} trajectory from $s_0$ to $s$, $\ell' = \ell(s')$ if $s' \notin \gX$, and $\ell' = \max_{s'' \in \gS} \ell(s'')$ otherwise. This process is illustrated in~\cref{appendix:graded-DAG}. We thus restrict our analysis in this section, without loss of generality, to graded DAGs.

\paragraph{The meaning of the backward policy:} Typically, the target distribution is over the objects $\gX$ of the last layer of a graded DAG, rather than over complete sequences or trajectories. Any backward policy $P_B$ on the DAG turns an unnormalized target distribution $R$ over $\gX$ into an unnormalized distribution over complete trajectories $\gT$:
\vspace*{-1mm}\begin{equation}\vspace*{-1mm}
\label{eq:P_B_tau}
    \forall \tau \in \gT \quad P_B(\tau) \propto R(x_\tau) P_B(\tau \mid x_\tau), \quad \text{with unknown partition function } \hat{Z}=\sum_{x \in \gX} R(x).
\end{equation}
The marginal distribution of $P_B$ over \termin{} states is equal to $R(x)/\hat{Z}$ by construction. Therefore, if $P_F$ is a forward policy that equals $P_B$ as a distribution over trajectories, then $P_F^\top(x)=R(x)/\hat{Z}\propto R(x)$.

\paragraph{VI training objectives:} In its most general form, the hierarchical variational objective (`HVI objective' in  the remainder of the paper) minimizes a statistical divergence $D_f$ between the learned and the target distributions over trajectories:
\vspace*{-1mm}\begin{equation}\vspace*{-1mm}
    \label{eq:HVI-objective}
    \gL_{{\rm HVI}, f}(P_F, P_B) = D_{f}(P_B\|P_F) = \E_{\tau\sim P_F}\left[f\left(\frac{P_B(\tau)}{P_F(\tau)}\right)\right].
\end{equation}
Two common objectives are the forward and reverse Kullback-Leibler (KL) divergences \citep{mnih2014neural}, corresponding to $f:t\mapsto t\log t$ for $\KL(P_B\|P_F)$ and $f:t\mapsto -\log t$ for $\KL(P_F\|P_B)$, respectively. Other $f$-divergences have been used, as discussed in \citet{zhang2019variational, wan2020fdiv}. Note that, similar to GFlowNets, \cref{eq:HVI-objective} can be minimized with respect to both the forward and backward policies, or can be minimized using a fixed backward policy.

\begin{wraptable}{r}{0.5\textwidth}
    \centering
    \vspace{-1em}
    \caption{A comparison of algorithms for approximating a target distribution in a hierarchical variational model or a GFlowNet. The gradients used to update the parameters of the sampling distribution and of the auxiliary backward policy approximate the gradients of various divergences between distributions over trajectories.}
    \vspace{-0.8em}
    \resizebox{1\linewidth}{!}{
    \begin{tabular}{@{}lcc}
    \toprule
    &\multicolumn{2}{c}{Surrogate loss}\\\cmidrule{2-3}
    Algorithm&$P_F$ (sampler)&$P_B$ (posterior)\\\midrule
         \algname{Reverse KL} & $\KL(P_F\|P_B)$ & $\KL(P_F\|P_B)$  \\
         \algname{Forward KL} & $\KL(P_B\|P_F)$ & $\KL(P_B\|P_F)$  \\
         \algname{Wake-sleep} (WS) & $\KL(P_B\|P_F)$ & $\KL(P_F\|P_B)$ \\
         \algname{Reverse wake-sleep} & $\KL(P_F\|P_B)$ & $\KL(P_B\|P_F)$ \\
         On-policy TB & $\KL(P_F\|P_B)$ & see \cref{sec:gradients} \\
         \bottomrule
    \end{tabular}
    }
    \label{tab:inference_objectives}
\end{wraptable}

Divergences between two distributions over trajectories and  divergences between their two marginal distributions over \termin{} states distributions are linked via the data processing inequality, assuming $f$ is convex (see e.g. \cite{zhang2019variational}), making the former a sensible surrogate objective for the latter:
\begin{equation}
    \label{eq:DP-equality}
    D_f(R/\hat{Z}\|P_F^\top) \leq D_f(P_B \| P_F)
\end{equation}
When both $P_B$ and $P_F$ are learned, the divergences with respect to which they are optimized need not be the same, as long as both objectives are 0 if and only if $P_F=P_B$. For example, wake-sleep algorithms \citep{Hinton1995TheA} optimize the generative model $P_F$ using $\KL(P_B\|P_F)$ and the posterior $P_B$ using $\KL(P_F\|P_B)$. A summary of common combinations is shown in \cref{tab:inference_objectives}.

We remark that tractable unbiased gradient estimators for objectives such as \cref{eq:HVI-objective} may not always exist, as we cannot exactly sample from or compute the density of $P_B(\tau)$ when its normalization constant $\hat Z$ is unknown. For example, while the REINFORCE estimator gives unbiased estimates of the gradient with respect to $P_F$ when the objective is \algname{Reverse KL} (see \cref{sec:gradients}), other objectives, such as \algname{Forward KL}, require importance-weighted estimators. Such estimators approximate sampling from $P_B$ by sampling a batch of trajectories $\{\tau_i\}$ from another distribution $\pi$ (which may equal $P_F$) and weighting a loss computed for each $\tau_i$ by a scalar proportional to $\frac{P_B(\tau_i)}{\pi(\tau_i)}$. Such \emph{reweighted importance sampling} is helpful in various variational algorithms, despite its bias when the number of samples is finite~\cite[e.g.,][]{bornschein2015rws,Burda2016ImportanceWA}, but it may also introduce variance that increases with the discrepancy between $P_B$ and $\pi$.%

\subsection{Analysis of gradients}
\label{sec:gradients}

The following proposition summarizes our main theoretical claim, relating the GFN objective of \cref{eq:TB} and the variational objective of \cref{eq:HVI-objective}. In \cref{appendix:proposition-extension}, we extend this result by showing an equivalence between the subtrajectory balance objective (introduced in \citet{malkin2022trajectory} and empirically evaluated in \citet{subtb}) and a natural extension of the nested variational objective \citep{zimmermann2021nested} to subtrajectories. A special case of this equivalence is between the Detailed Balance objective \citep{bengio2021foundations} and the nested VI objective \citep{zimmermann2021nested}.
\begin{proposition}
\label{prop:TB-eq-HVI}
Given a graded DAG $\gG$, and denoting by $\theta, \phi$ the parameters of the forward and backward policies $P_F, P_B$ respectively, the gradients of the TB objective \cref{eq:TB} satisfy:
\vspace*{-1mm}\begin{align}
    \nabla_\phi \KL(P_B\|P_F) &= \frac{1}{2} \E_{\tau \sim P_B}[\nabla_\phi \LTB(\tau)], \label{eq:prop-1} \\
    \nabla_\theta \KL(P_F\|P_B) &= \frac{1}{2} \E_{\tau \sim P_F}[\nabla_\theta \LTB(\tau)] . \label{eq:prop-2} 
\vspace*{-3mm}\end{align}
\end{proposition}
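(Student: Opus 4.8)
The plan is to expand both sides directly with the log-derivative (score function) identity, tracking carefully how the learned partition function $Z$ — which appears in $\LTB$ but in neither KL divergence — cancels in expectation. Introduce the shorthand $s(\tau) = \log\frac{Z\,P_F(\tau)}{R(x_\tau)P_B(\tau\mid x_\tau)}$, so that $\LTB(\tau) = s(\tau)^2$. Using \cref{eq:P_B_tau}, namely $P_B(\tau) = R(x_\tau)P_B(\tau\mid x_\tau)/\hat Z$, rewrite $s(\tau) = \log\tfrac{Z}{\hat Z} + \log\tfrac{P_F(\tau)}{P_B(\tau)}$. Since $\hat Z$ and $R$ are $\phi$-independent, $\nabla_\phi\log P_B(\tau\mid x_\tau) = \nabla_\phi\log P_B(\tau)$, hence $\nabla_\phi s(\tau) = -\nabla_\phi\log P_B(\tau)$; and since $Z$, $R$, $P_B$ are all $\theta$-independent, $\nabla_\theta s(\tau) = \nabla_\theta\log P_F(\tau)$.

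For \cref{eq:prop-1}, differentiating $\LTB(\tau) = s(\tau)^2$ gives $\tfrac12\nabla_\phi\LTB(\tau) = -s(\tau)\nabla_\phi\log P_B(\tau)$, so the right-hand side of \cref{eq:prop-1} is $-\E_{\tau\sim P_B}\big[s(\tau)\nabla_\phi\log P_B(\tau)\big]$. Substituting $s(\tau) = \log\tfrac{Z}{\hat Z} - \log\tfrac{P_B(\tau)}{P_F(\tau)}$ splits this into $-\log\tfrac{Z}{\hat Z}\,\E_{\tau\sim P_B}[\nabla_\phi\log P_B(\tau)]$ plus $\E_{\tau\sim P_B}\big[\log\tfrac{P_B(\tau)}{P_F(\tau)}\,\nabla_\phi\log P_B(\tau)\big]$. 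The first expectation vanishes by the score-function identity $\sum_\tau\nabla_\phi P_B(\tau) = \nabla_\phi\sum_\tau P_B(\tau) = \nabla_\phi 1 = 0$ — this is exactly where $Z$ drops out. For the left-hand side, expand $\nabla_\phi\KL(P_B\|P_F) = \nabla_\phi\sum_\tau P_B(\tau)\log\tfrac{P_B(\tau)}{P_F(\tau)} = \E_{\tau\sim P_B}\big[\log\tfrac{P_B(\tau)}{P_F(\tau)}\,\nabla_\phi\log P_B(\tau)\big] + \E_{\tau\sim P_B}[\nabla_\phi\log\tfrac{P_B(\tau)}{P_F(\tau)}]$, and the second term is again zero by the score-function identity ($P_F$ contributes nothing, being $\phi$-independent). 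Matching the two surviving terms yields \cref{eq:prop-1}.

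Equation \cref{eq:prop-2} is the mirror image: $\tfrac12\nabla_\theta\LTB(\tau) = s(\tau)\nabla_\theta\log P_F(\tau)$, and writing $s(\tau) = \log\tfrac{Z}{\hat Z} + \log\tfrac{P_F(\tau)}{P_B(\tau)}$ reduces $\tfrac12\E_{\tau\sim P_F}[\nabla_\theta\LTB(\tau)]$ to $\E_{\tau\sim P_F}\big[\log\tfrac{P_F(\tau)}{P_B(\tau)}\,\nabla_\theta\log P_F(\tau)\big]$ once the constant term vanishes via $\E_{\tau\sim P_F}[\nabla_\theta\log P_F(\tau)] = 0$. Expanding $\nabla_\theta\KL(P_F\|P_B)$ the same way and discarding $\E_{\tau\sim P_F}[\nabla_\theta\log\tfrac{P_F(\tau)}{P_B(\tau)}]$ (zero by the same identity, with $P_B$ now $\theta$-independent) gives the claim.

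The only genuinely delicate point is the cancellation of $\log\tfrac{Z}{\hat Z}$: it works precisely because the expectation in the proposition is taken under $P_B$ (resp. $P_F$) — the same distribution appearing in the denominator of the ratio inside $\LTB$ — so that $s(\tau)$ differs from $\pm\log\tfrac{P_B(\tau)}{P_F(\tau)}$ only by a constant, which is annihilated by the expected score. This is why the statement concerns expected gradients under on-policy sampling rather than pointwise gradients, and it is also where the graded-DAG hypothesis enters: via the correspondence of \cref{sec:GFNs-as-HVMs} and \cref{eq:P_B_tau} it guarantees that $P_B$ and $P_F$ are honest probability distributions over the common trajectory set $\gT$, so the score-function identities apply. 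Everything else is the product rule and $\nabla\log(\cdot) = \nabla(\cdot)/(\cdot)$.
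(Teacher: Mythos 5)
Your proof is correct and follows essentially the same route as the paper's: both rely on writing the TB log-ratio as $\log\frac{P_F(\tau)}{P_B(\tau)}$ plus the constant $\log\frac{Z}{\hat Z}$ and killing that constant with the expected-score identity $\E_{\tau\sim P}[\nabla\log P(\tau)]=0$, the only difference being that you expand the expected TB gradient down to the KL gradient while the paper expands the KL gradient up to the TB one. Your closing remark correctly identifies the key point (the expectation must be taken under the distribution whose score appears, so the constant is annihilated), and the argument is complete.
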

The proof of the extended result appears in \Cref{appendix:proposition-extension}. An alternative proof is provided in \Cref{appendix:proof}.

While \cref{eq:prop-2} is the on-policy TB gradient with respect to the parameters of $P_F$, \cref{eq:prop-1} is \emph{not} the on-policy TB gradient with respect to the parameters of $P_B$, as the expectation is taken over $P_B$, not $P_F$. The on-policy TB gradient can however be expressed through a surrogate loss
\vspace*{-1mm}\begin{equation}\vspace*{-1mm}
\EE_{\tau\sim P_F}[\nabla_\phi\LTB(\tau)]=\nabla_\phi\left[D_{\log^2}(P_B\|P_F)+2(\log Z-\log\hat Z)\KL(P_F\|P_B)\right],
\label{eq:tb_surrogate}
\end{equation}
where $\hat Z=\sum_{x\in\gX}R(x)$, the unknown true partition function. Here $D_{\log^2}$ is the pseudo-$f$-divergence defined by $f(x)=\log(x)^2$, which is not convex for large $x$. (Proof in \cref{appendix:proof}.)

The loss in \cref{eq:prop-1} is not possible to optimize directly unless using importance weighting (cf.\ the end of \S\ref{sec:GFNs-as-HVMs}), but optimization of $P_B$ using \cref{eq:prop-1} and $P_F$ using \cref{eq:prop-2} would yield the gradients of \algname{Reverse wake-sleep} in expectation.

\paragraph{Score function estimator and variance reduction:} Optimizing the reverse KL loss $\KL(P_F\|P_B)$ with respect to $\theta$, the parameters of $P_F$,  requires a likelihood ratio (also known as REINFORCE) estimator of the gradient \citep{williams1992simple}, using a trajectory $\tau$ (or a batch of trajectories), which takes the form:
\begin{align}
\label{eq:HVI-gradient-no-baseline}
    \Delta(\tau) = \nabla_\theta \log P_F(\tau ; \theta) c(\tau), \quad \text{where } c(\tau) = \log \frac{P_F(\tau)}{R(x_\tau)P_B(\tau \mid x_\tau)}
\end{align}
(Note that the term $\nabla_\theta c(\tau)$ that is typically present in the REINFORCE estimator is 0 in expectation, since $\E_{\tau\sim P_F}[\nabla_\theta \log P_F(\tau)]=\sum_\tau \frac{P_F(\tau)}{P_F(\tau)}\nabla_\theta P_F(\tau)=0$.) The estimator of \cref{eq:HVI-gradient-no-baseline} is known to exhibit high variance norm, thus slowing down learning. A common workaround is to subtract a baseline $b$ from $c(\tau)$, which does not bias the estimator. The value of the baseline $b$ (also called control variate) that most reduces the trace of the covariance matrix of the gradient estimator is 
\vspace*{-1mm}\begin{equation*}\vspace*{-1mm}
b^* = \frac{\E_{\tau \sim P_F}[c(\tau) \|\nabla_\theta \log P_F(\tau ; \theta)\|^2]}{\E_{\tau \sim P_F}[ \|\nabla_\theta \log P_F(\tau ; \theta)\|^2 ]},
\end{equation*}

commonly approximated with $\E_{\tau \sim P_F}[c(\tau)]$ (see, e.g., \citet{weaver2001optimal, wu2018variance}). This approximation is itself often approximated with a batch-dependent \textbf{local} baseline, from a batch of trajectories $\{\tau_i\}_{i=1}^B$:
\vspace*{-1mm}\begin{equation}\vspace*{-1mm}
\label{eq:b-local}
    b^{\rm local} = \frac{1}{B} \sum_{i=1}^B c(\tau_i)
\end{equation}
A better approximation of the expectation $\E_{\tau \sim P_F}[c(\tau)]$ can be obtained by maintaining a running average of the values $c(\tau)$, leading to a \textbf{global} baseline. After observing each batch of trajectories, the running average is updated with step size $\eta$:
\vspace*{-1mm}\begin{equation}\vspace*{-1mm}
\label{eq:b-global}
    b^{\rm global} \leftarrow (1 - \eta) b^{\rm global} + \eta b^{\rm local}.
\end{equation}
This coincides with the update rule of $\log Z$ in the minimization of $\LTB(P_F, P_B, Z)$ with a learning rate $\frac\eta2$ for the parameter $\log Z$ (with respect to which the TB objective is quadratic). Consequently, \cref{eq:prop-2} of \cref{prop:TB-eq-HVI} shows that the update rule for the parameters of $P_F$, when optimized using the \algname{Reverse KL} objective, with \cref{eq:b-global} as a control variate for the score function estimator of its gradient, is the same as the update rule obtained by optimizing the TB objective using on-policy trajectories.

While learning a backward policy $P_B$ can speed up convergence \citep{malkin2022trajectory}, the TB objective can also be used with a \emph{fixed} backward policy, in which case the \algname{Reverse KL} objective and the TB objective differ only in how they reduce the variance of the estimated gradients, if the trajectories are sampled on-policy. In \cref{sec:experiments}, we experimentally explore the differences between the two learning paradigms that arise when $P_B$ is learned, or when the algorithms run off-policy.%

\section{Related Work}
\paragraph{(Hierarchical) VI:}

Variational inference~\citep{Zhang2019AdvancesIV} techniques originate from graphical models~\citep{Saul1996MeanFT,Jordan2004AnIT}, which typically include an inference machine and a generative machine to model the relationship between latent variables and observed data.
The line of work on black-box VI~\citep{ranganath2014black} focuses on learning the inference machine given a data generating process, \ie, inferring the posterior over latent variables. 
Hierarchical modeling exhibits appealing properties under such settings as discussed in \citet{ranganath2016hierarchical,Yin2018SemiImplicitVI, sobolev2019importance}.
On the other hand,
works on variational auto-encoders (VAEs)~\citep{Kingma2014AutoEncodingVB, JimenezRezende2014StochasticBA}
focus on generative modeling, 
where the inference machine -- the estimated variational posterior -- is a tool to assist optimization of the generative machine or decoder.
Hierarchical construction of multiple latent variables has also been shown to be beneficial~\citep{sonderby2016ladder, Maale2019BIVAAV,Child2021VeryDV}. 

While earlier works simplify the variational family with mean-field approximations~\citep{Bishop2007PatternRA}, modern inference methods rely on amortized stochastic optimization~\citep{Hoffman2013StochasticVI}.
One of the oldest and most commonly used ideas is REINFORCE ~\citep{williams1992simple,Paisley2012VariationalBI} which gives unbiased gradient estimation.
Follow-up work \citep{Titsias2014DoublySV,Gregor2014DeepAN,mnih2014neural, Mnih2016VariationalIF} proposes advanced estimators to reduce the high variance of REINFORCE. The log-variance loss proposed by \citet{richter2020vargrad} is equivalent in expected gradient of $P_F$ to the on-policy TB loss for a GFlowNet with a batch-optimal value of $\log Z$. 
On the other hand, path-wise gradient estimators~\citep{Kingma2014AutoEncodingVB} have much lower variance, but have limited applicability.
Later works combine these two approaches for particular distribution families~\citep{Tucker2017REBARLU,Grathwohl2018BackpropagationTT}.

Beyond the evidence lower bound (ELBO) objective used in most variational inference methods, more complex objectives have been studied. 
Tighter evidence bounds have proved beneficial to the learning of generative machines~\citep{Burda2016ImportanceWA, Domke2018ImportanceWA,Rainforth2018TighterVB,Masrani2019TheTV}.
As KL divergence optimization suffers from issues such as mean-seeking behavior and posterior variance underestimation~\citep{Minka2005DivergenceMA},
other divergences are adopted as in expectation propagation~\citep{Minka2001ExpectationPF, Li2015StochasticEP}, more general $f$-divergences~\citep{Dieng2017VariationalIV,Wang2018VariationalIW, wan2020fdiv}, their special case $\alpha$-divergences~\citep{HernndezLobato2016BlackBoxAD}, and Stein discrepancy~\citep{Liu2016SteinVG, Ranganath2016OperatorVI}. GFlowNets could be seen as providing a novel pseudo-divergence criterion, namely TB, as discussed in this work.

\paragraph{Wake-sleep algorithms:} Another branch of work, starting with \citet{Hinton1995TheA}, proposes to avoid issues from stochastic optimization (such as REINFORCE) by alternatively optimizing the generative and inference (posterior) models.
Modern versions extending this framework include reweighted wake-sleep \cite{bornschein2015rws, anh2019revisit} and memoised wake-sleep~\citep{Hewitt2020LearningTL, Le2022HybridMW}.
It was shown in~\citet{anh2019revisit} that wake-sleep algorithms behave well for tasks involving stochastic branching.

\paragraph{GFlowNets:} GFlowNets have been used successfully in settings where RL and MCMC methods have been used in other work, including molecule discovery \citep{bengio2021flow,malkin2022trajectory,subtb}, biological sequence design \citep{malkin2022trajectory,jain2022biological,subtb}, and Bayesian structure learning \citep{deleu2022bayesian}. A connection of the theoretical foundations of GFlowNets \citep{bengio2021flow,bengio2021foundations} with variational methods was first mentioned by \citet{malkin2022trajectory} and expanded in \citet{zhang2022unifying,zhang2023unifying}.

A concurrent and closely related paper \citep{zimmermann2022variational} theoretically and experimentally explores interpolations between forward and reverse KL objectives.

\section{Experiments}
\label{sec:experiments}

The goal of the experiments is to empirically investigate two main observations consistent with the above theoretical analysis:

\textbf{Observation 1.} On-policy VI and TB (GFlowNet) objectives can behave similarly in some cases, when both can be stably optimized, while in others on-policy TB strikes a better compromise than either the (mode-seeking) \algname{Reverse KL} or (mean-seeking) \algname{Forward KL} VI objectives. This claim is supported by the experiments on all three domains below.

However, in all cases, notable differences emerge. In particular, HVI training becomes more stable near convergence and is sensitive to learning rates, which is consistent with the hypotheses about gradient variance in \cref{sec:gradients}.

\textbf{Observation 2.} When exploration matters, off-policy TB outperforms both on-policy TB and VI objectives, avoiding the possible high variance induced by importance sampling in off-policy VI. GFlowNets are capable of stable off-policy training without importance sampling. This claim is supported by experiments on all domains, but is especially well illustrated on the realistic domains in \cref{sec:molecules} and \cref{sec:dags}. 
This capability provides advantages for capturing a more diverse set of modes. 

\textbf{Observation 1} and \textbf{Observation 2} provide evidence that off-policy TB is the best method among those tested in terms of both accurately fitting the target distribution and effectively finding modes, where the latter is particularly important for the challenging  molecule graph generation and causal graph discovery problems studied below.

\subsection{Hypergrid: Exploration of learning objectives}
\label{sec:grid}

\begin{figure}
\vspace*{-1em}
\centering
    \includegraphics[width=0.95\textwidth]{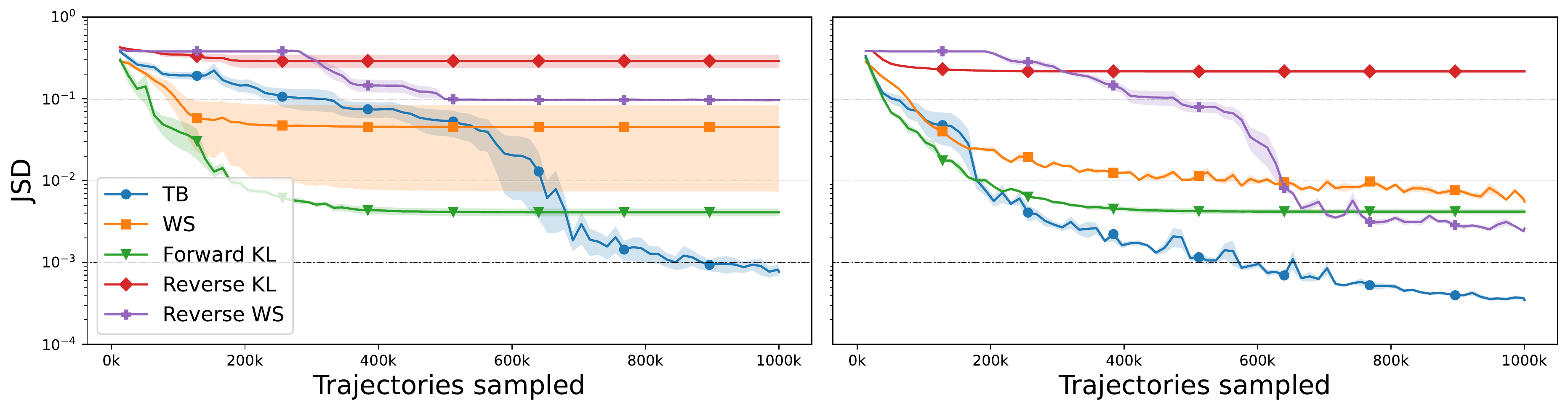}
\includegraphics[width=0.95\textwidth]{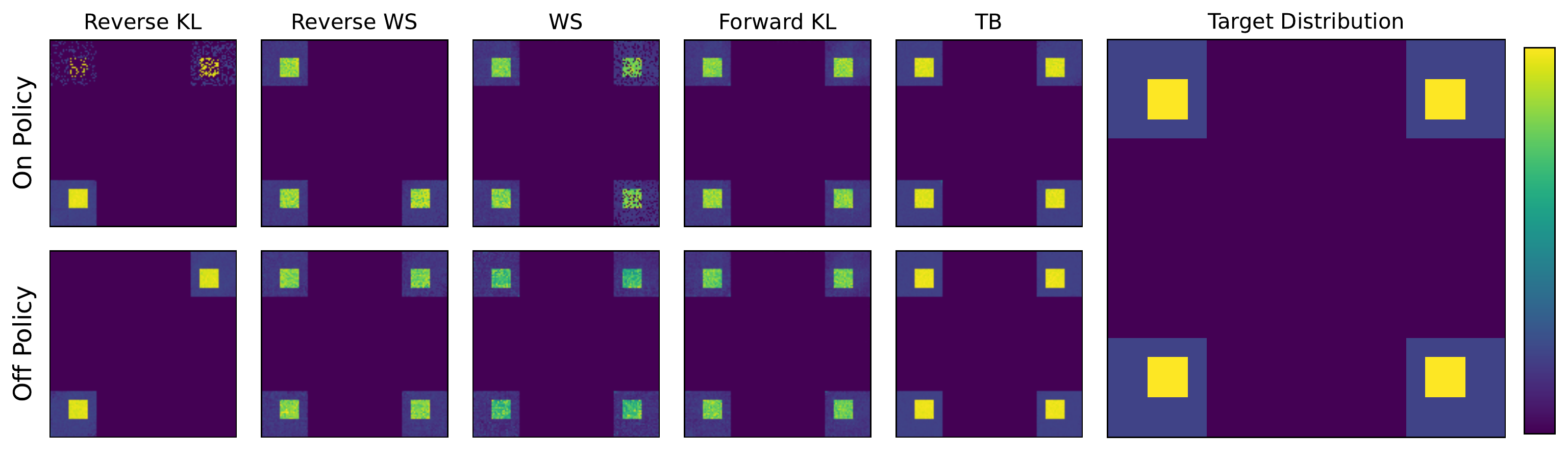}

\caption{\textbf{Top:} The evolution of the JSD between the learned sampler $P_F^\top$ and the target distribution on the $128\times128$ grid, as a function of the number of trajectories sampled. Shaded areas represent the standard error evaluated across 5 different runs (on-policy \textbf{left}, off-policy \textbf{right}). \textbf{Bottom:} The average (across 5 runs) final learned distribution $P_F^\top$ for the different algorithms, along with the target distribution.  To amplify variation, the plot intensity at each grid  position is resampled from the Gaussian approximating the distribution over the 5 runs. Although \algname{WS}, \algname{Forward KL}, and \algname{Reverse WS} (off-policy) find the 4 target modes, they do not model them with high precision, and produce a textured pattern at the modes, where it should be flat.\\[-2em]}
\label{fig:hypergrid}
\end{figure}

In this section, we comparatively study the ability of the variational objectives and the GFlowNet objectives to learn a multimodal distribution given by its unnormalized density, or reward function, $R$. We use the synthetic hypergrid environment introduced by \citet{bengio2021flow} and further explored by \citet{malkin2022trajectory}. The states form a $D$-dimensional hypergrid with side length $H$, and the reward function has $2^D$ flat modes near the corners of the hypergrid. The states form a pointed DAG, where the source state is the origin $s_0 = \bm{0}$, and each edge corresponds to the action of incrementing one coordinate in a state by 1 (without exiting the grid). More details about the environment are provided in \cref{appendix:hypergrid}. We focus on the case where $P_B$ is learned, which has been shown to accelerate convergence \citep{malkin2022trajectory}.

In \cref{fig:hypergrid}, we compare how fast each learning objective discovers the $4$ modes of a $128 \times 128$ grid, with an exploration parameter $R_0 = 0.001$ in the reward function. The gap between the learned distribution $P_F^\top$ and the target distribution is measured by the Jensen-Shannon divergence (JSD) between the two distributions, to avoid giving a preference to one KL or the other. %
Additionally, we show graphical representations of the learned 2D \termin{} states distribution, along with the target distribution. We provide in \cref{appendix:metrics} details on how $P_F^\top$ and the JSD are evaluated and how hyperparameters were optimized separately for each learning algorithm.

Exploration poses a challenge in this environment, given the distance that separates the different modes. We thus include in our analysis an off-policy version of each objective, where the behavior policy is different from, but related to, the trained sampler $P_F(\tau)$. The GFlowNet behavior policy used here
encourages exploration by reducing the probability of terminating a trajectory at any state of the grid. 
This biases the learner towards sampling longer trajectories and helps with faster discovery of farther modes.
When off-policy, the HVI gradients are corrected using importance sampling weights.

For the algorithms that use a score function estimator of the gradient (\algname{Forward KL}, \algname{Reverse WS}, and \algname{Reverse KL}), we found that using a global baseline, as explained in \cref{sec:GFNs-as-HVMs}, was better than using the more common local baseline in most cases (see \cref{fig:importance_of_baseline}).  This brings the VI methods closer to GFlowNets and thus factors out this issue from the comparison with the GFlowNet objectives.

We see from \cref{fig:hypergrid} that while \algname{Forward KL} and \algname{WS} -- the two algorithms that use $\KL(P_B\|P_F)$ as the objective for $P_F$ -- discover the four modes of the distribution faster, they converge to a local minimum and do not model all the modes with high precision. This is due to the mean-seeking behavior of the forward KL objective, requiring that $P_F^\top$ puts non-zero mass on \termin{} states $x$ where $R(x)>0$. Objectives that use the reverse KL to train the forward policy (\algname{Reverse KL} and \algname{Reverse WS}) are mode-seeking and can thus have a low loss without finding all the modes. The TB GFlowNet objective offers the best of both worlds, as it converges to a lower value of the JSD, discovers the four modes, and models them with high precision. This supports \textbf{Observation 1}. Additionally, in support of \textbf{Observation 2}, while both the TB objective and the HVI objectives benefit from off-policy sampling, TB benefits more, as convergence is greatly accelerated.

We supplement this study with a comparative analysis of the algorithms on smaller grids in \Cref{appendix:hypergrid}.

\subsection{Molecule synthesis}
\label{sec:molecules}

\begin{figure}[t]
\vspace*{-1em}
    \centering
    \includegraphics[width=0.9\textwidth,clip]{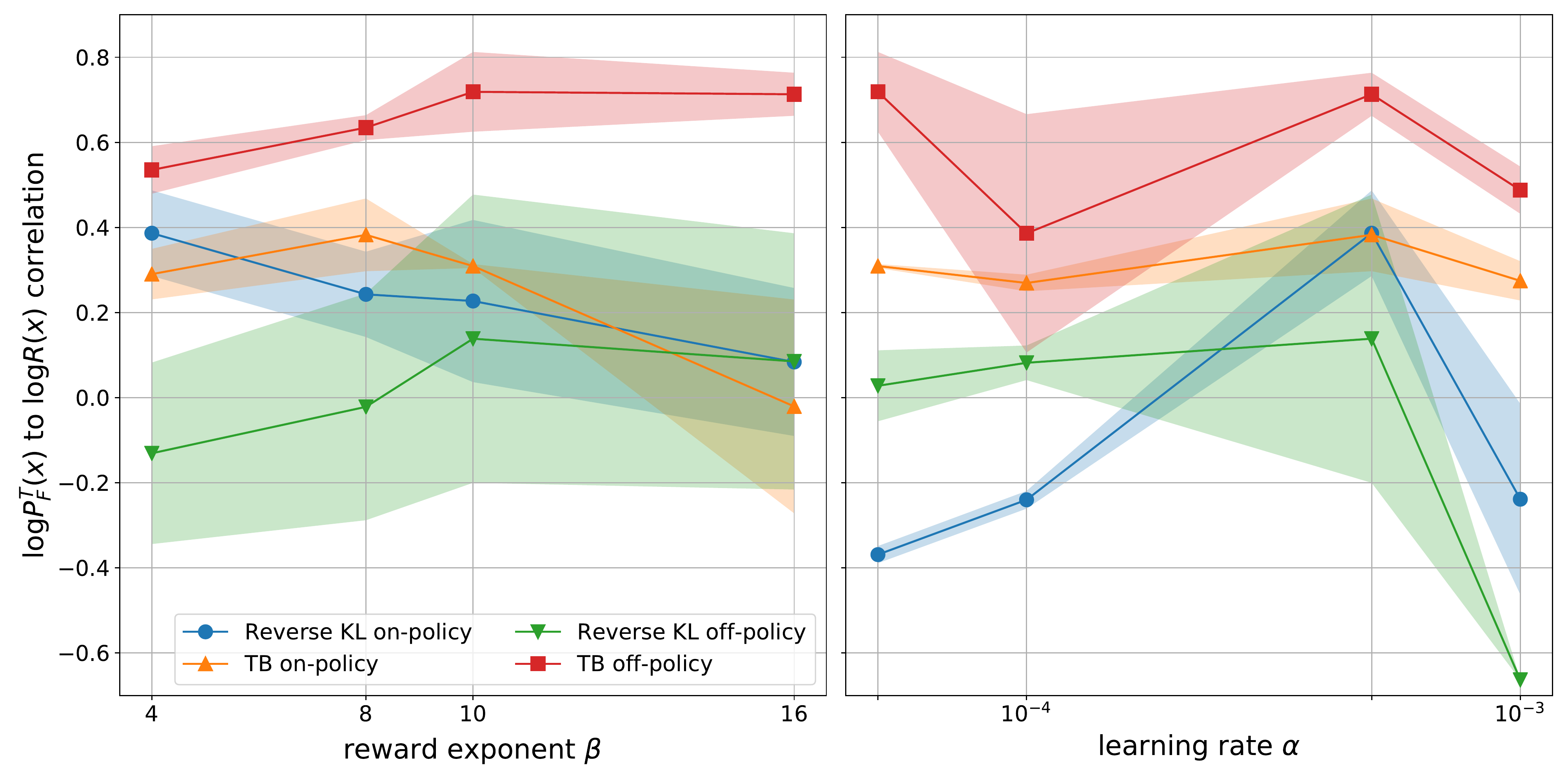}
    \caption{Correlation between marginal sampling log-likelihood and log-reward on the molecule generation task for different learning algorithms, showing the advantage of off-policy TB (red) against on-policy TB (orange) and both on-policy (blue) and off-policy HVI (green). For each hyperparameter setting on the $x$-axis ($\alpha$ or $\beta$), we take the optimal choice of the other hyperparameter ($\beta$ or $\alpha$, respectively) and plot the mean and standard error region over three random seeds.\\[-2em]}
    \label{fig:molecules}
\end{figure}

We study the molecule synthesis task from \citet{bengio2021flow}, in which molecular graphs are generated by sequential addition of subgraphs from a library of blocks \citep{Jin_2020, kumar2012fragment}. The reward function is expressed in terms of a  fixed, pretrained graph neural network $f$ that estimates the strength of binding to the soluble epoxide hydrolase protein \citep{trott2010autodock}. To be precise, $R(x)=f(x)^\beta$, where $f(x)$ is the output of the binding model on molecule $x$ and $\beta$ is a parameter that can be varied to control the entropy of the sampling model.

Because the number of \termin{} states is too large to make exact computation of the target distribution possible, we use a performance metric from past work on this task~\citep{bengio2021flow} to evaluate sampling agents. Namely, for each molecule $x$ in a held-out set, we compute $\log P_F^\top(x)$, the likelihood of $x$ under the trained model (computable by dynamic programming, see \Cref{appendix:metrics}), and evaluate the Pearson correlation of $\log P_F^\top(x)$ and $\log R(x)$. This value should equal 1 for a perfect sampler, as $\log P_F^\top(x)$ and $\log R(x)$ would differ by a constant, the log-partition function $\log \hat{Z}$.

In \citet{malkin2022trajectory}, GFlowNet samplers using the DB and TB objectives, with the backward policy $P_B$ fixed to a uniform distribution over the parents of each state, were trained off-policy. Specifically, the trajectories used for DB and TB gradient updates were sampled from a mixture of the (online) forward policy $P_F$ and a uniform distribution at each sampling step, with a special weight depending on the trajectory length used for the termination action.

We wrote an extension of the published code of \citet{malkin2022trajectory} with an implementation of the HVI (\algname{Reverse KL}) objective, using a reweighted importance sampling correction. We compare the off-policy TB from past work with the off-policy \algname{Reverse KL}, as well as on-policy TB and \algname{Reverse KL} objectives. (Note that on-policy TB and \algname{Reverse KL} are equivalent in expectation in this setting, since the backward policy is fixed.) Each of the four algorithms was evaluated with four values of the inverse temperature parameter $\beta$ and of the learning rate $\alpha$, for a total of $4\times4\times4=64$ settings. (We also experimented with the off-policy \algname{Forward KL} / WS objective for optimizing $P_F$, but none of the hyperparameter settings resulted in an average correlation greater than 0.1.)

The results are shown in Fig.~\ref{fig:molecules}, in which, for each hyperparameter ($\alpha$ or $\beta$), we plot the performance for the optimal value of the other hyperparameter. We make three observations:
\begin{enumerate}[left=0pt,label=$\bullet$,nosep]
\item In support of \textbf{Observation 2}, off-policy \algname{Reverse KL} performs poorly compared to its on-policy counterpart, especially for smoother distributions (smaller values of $\beta$) where more diversity is present in the target distribution. Because the two algorithms agree in the expected gradient, this suggests that importance sampling introduces unacceptable variance into HVI gradients.
\item In support of \textbf{Observation 1}, the difference between on-policy \algname{Reverse KL} and on-policy TB is quite small, consistent with their gradients coinciding in the limit of descent along the full-batch gradient field. However, \algname{Reverse KL} algorithms are more sensitive to the learning rate.
\item In support of \textbf{Observation 2}, off-policy TB gives the best and lowest-variance fit to the target distribution, showing the importance of an exploratory training policy, especially for sparser reward landscapes (higher $\beta$).
\end{enumerate}

\subsection{Generation of DAGs in Bayesian structure learning}
\label{sec:dags}

Finally, we consider the problem of learning the (posterior) distribution over the structure of Bayesian networks, as studied in \citet{deleu2022bayesian}. The goal of Bayesian structure learning is to approximate the posterior distribution $p(G\mid \gD)$ over DAGs $G$, given a dataset of observations $\gD$. Following \citet{deleu2022bayesian}, we treat the generation of a DAG as a sequential decision problem, where directed edges are added one at a time, starting from the completely disconnected graph. Since our goal is to approximate the posterior distribution $p(G\mid \gD)$, we use the joint probability $R(G) = p(G, \gD)$ as the reward function, which is proportional to the former up to a normalizing constant. Details about how this reward is computed, as well as the parametrization of the forward policy $P_{F}$, are available in \Cref{appendix:dags}. Note that similarly to \cref{sec:molecules}, and following \citet{deleu2022bayesian}, we leave the backward policy $P_{B}$ fixed to uniform.

We only consider settings where the true posterior distribution $p(G\mid \gD)$ can be computed exactly by enumerating all the possible DAGs $G$ over $d$ nodes (for $d\leq 5$). This allows us to exactly compare the posterior approximations, found either with the GFlowNet objectives or HVI, with the target posterior distribution. The state space grows rapidly with the number of nodes (e.g., there are $29$k DAGs over $d=5$ nodes). For each experiment, we sampled a dataset $\gD$ of $100$ observations from a randomly generated ground-truth graph $G^{\star}$; the size of $\gD$ was chosen to obtain highly multimodal posteriors. In addition to the (Modified) DB objective introduced by \citet{deleu2022bayesian}, we also study the TB (GFlowNet) and the \algname{Reverse KL} (HVI) objectives, both on-policy and off-policy.

\begin{table}[t]
    \centering
    \vspace{-1em}
    \caption{Comparison of the Jensen-Shannon divergence for Bayesian structure learning, showing the advantage of off-policy TB over on-policy TB and on-policy or off-policy HVI. The JSD is measured between the true posterior distribution $p(G\mid \gD)$ and the learned approximation $P_{F}^{\top}(G)$.}\vspace{-0.8em}
    \begin{tabular}{@{}lccc}
    \toprule
    &\multicolumn{3}{c}{Number of nodes}\\\cmidrule{2-4}
    Objective & 3 & 4 & 5 \\
    \midrule
    (Modified) Detailed Balance &  $5.32 \pm 4.15 \times 10^{-6}$ &  $2.05 \pm 0.70 \times 10^{-5}$ &  $\mathbf{4.65 \pm 1.08 \times 10^{-4}}$ \\
    Off-Policy Trajectory Balance &  $\mathbf{3.70 \pm 2.51 \times 10^{-7}}$ &  $\mathbf{9.35 \pm 2.99 \times 10^{-6}}$ &  $5.44 \pm 2.47 \times 10^{-4}$ \\
    On-Policy Trajectory Balance &               $0.022 \pm 0.007$ &               $0.123 \pm 0.028$ &               $0.277 \pm 0.040$ \\
    On-Policy \algname{Reverse KL} (HVI) &               $0.022 \pm 0.007$ &               $0.125 \pm 0.027$ &               $0.306 \pm 0.042$ \\
    Off-Policy \algname{Reverse KL} (HVI) &               $0.014 \pm 0.008$ &               $0.605 \pm 0.019$ &               $0.656 \pm 0.009$ \\
    \bottomrule
    \end{tabular}%
    \label{tab:dag-jsd}
\vspace*{-3mm}
\end{table}

In \cref{tab:dag-jsd}, we compare the posterior approximations found using these different objectives in terms of their Jensen-Shannon divergence (JSD) to the target posterior distribution $P(G\mid \gD)$. We observe that on the easiest setting (graphs over $d=3$ nodes), all methods accurately approximate the posterior distribution. But as we increase the complexity of the problem (with larger graphs), we observe that the accuracy of the approximation found with Off-Policy \algname{Reverse KL} degrades significantly, while the ones found with the off-policy GFlowNet objectives ((Modified) DB \& TB) remain very accurate. We also note that the performance of On-Policy TB and On-Policy \algname{Reverse KL} degrades too, but not as significantly; furthermore, both of these methods achieve similar performance across all experimental settings, confirming our \textbf{Observation 1}, and the connection highlighted in \cref{sec:GFNs-as-HVMs}. The consistent behavior of the off-policy GFlowNet objectives compared to the on-policy objectives (TB \& \algname{Reverse KL}) as the problem increases in complexity (i.e., as the number of nodes $d$ increases, requiring better exploration) also supports our \textbf{Observation 2}. These observations are further confirmed when comparing the edge marginals $P(X_{i}\rightarrow X_{j}\mid \gD)$ in \cref{fig:dag_edge_marginals} (\cref{appendix:dags}), computed either with the target posterior distribution or with the posterior approximations.

\section{Discussion and conclusions}

The theory and experiments in this paper place GFlowNets, which had been introduced and motivated as a reinforcement learning method, in the family of variational methods. They suggest that off-policy GFlowNet objectives may be an advantageous replacement to previous VI objectives, especially when the target distribution is highly multimodal, striking an interesting balance between the mode-seeking (\algname{Reverse KL}) and mean-seeking (\algname{Forward KL}) VI variants. This work should prompt more research on how best to choose the behavior policy in off-policy GFlowNet training, seen as a means to efficiently explore and discover modes. 

Whereas the experiments performed here focused on the realm of discrete variables, future work should also investigate GFlowNets for continuous action spaces as potential alternatives to VI in continuous-variable domains. We make some first steps in this direction in the Appendix (\S\ref{appendix:continuous}). While this paper was under review, \citet{continuous-gfn} introduced theory for continuous GFlowNets and showed that some of our claims extend to continuous domains.

\section*{Author contributions}

N.M., X.J., D.Z., and Y.B. observed the connection between GFlowNets and variational inference, providing motivation for the main ideas in this work.
N.M., X.J., and T.D. did initial experimental exploration.
S.L., N.M., and D.Z. contributed to the theoretical analysis.
S.L. and N.M. extended the theoretical analysis to subtrajectory objectives.
D.Z. reviewed the related work.
S.L. performed experiments on the hypergrid domain.
N.M. performed experiments on the molecule domain and the stochastic control domain.
T.D., E.H., and K.E. performed experiments on the causal graph domain.
All authors contributed to planning the experiments, analyzing their results, and writing the paper.

\section*{Acknowledgments}

The authors thank Moksh Jain for valuable discussions about the project.

This research was enabled in part by computational resources provided by the Digital Research Alliance of Canada. All authors are funded by their primary institution. We also acknowledge funding from CIFAR, Genentech, Samsung, and IBM.

\bibliography{references}
\bibliographystyle{iclr_style/iclr2023_conference}

\newpage

\counterwithin{figure}{section}
\counterwithin{table}{section}

\appendix

\section{Canonical construction of a graded DAG}
\label{appendix:graded-DAG}

\begin{figure}[ht]
     \centering
     \begin{subfigure}[b]{0.45\textwidth}
         \centering
         \includegraphics[width=\linewidth]{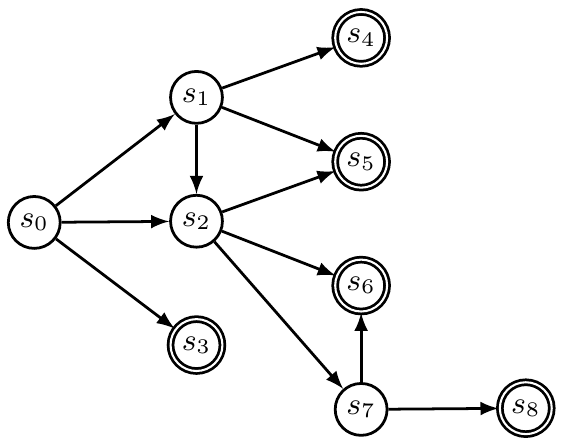}
     \end{subfigure}
     \begin{subfigure}[b]{0.45\textwidth}
         \centering
         \includegraphics[width=\linewidth]{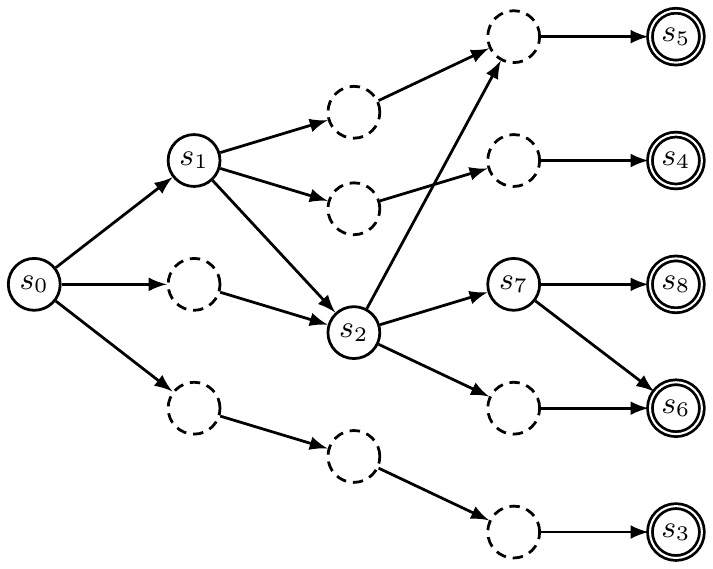}
     \end{subfigure}
        \caption{Illustration of the process by which a DAG (left) can turn into a graded DAG (right). Nodes with a double border represent terminating states. Nodes with a dashed border represent dummy states added to make the DAG graded.}
        \label{fig:dag-and-graded-dag}
\end{figure}

\Cref{fig:dag-and-graded-dag} shows the canonical conversion of a DAG into a graded DAG as described in \S\ref{sec:GFNs-as-HVMs}. Note that this operation is idempotent: applying it to a graded DAG yields the same graded DAG.

\section{Proofs}
\label{appendix:proof}

We prove \cref{prop:TB-eq-HVI}.
\begin{proof}
For a complete trajectory $\tau \in \gT$, denote by $c(\tau) = \log \frac{P_F(\tau)}{R(x_\tau) P_B(\tau \mid x_\tau)}$. We have the following:
\begin{align}
    &\nabla_\theta c(\tau) = \nabla_\theta \log P_F(\tau) \label{eq:proof-prop-1} \\
    &\nabla_\phi c(\tau) = - \nabla_\phi \log P_B(\tau \mid x_\tau) = -\nabla_\phi \log P_B(\tau) \label{eq:proof-prop-2}
\end{align}
Denoting by $f_1: t \mapsto t \log t$ and $f_2: t \mapsto - \log t$, which correspond to the forward and reverse KL divergences respectively, and starting from
\begin{align*}
    \gL_{{\rm HVI}, f_2}(P_F, P_B) &= D_{KL}(P_F \| P_B) = \E_{\tau \sim P_F} \left[\log \frac{P_F(\tau)}{P_B(\tau)} \right] = \E_{\tau \sim P_F} \left[c(\tau) \right] + \log\hat{Z}, \\
    \gL_{{\rm HVI}, f_1}(P_F, P_B) &= D_{KL}(P_B \| P_F) = \E_{\tau \sim P_B} \left[\log \frac{P_B(\tau)}{P_F(\tau)} \right] = - \left(\E_{\tau \sim P_B} \left[c(\tau) \right] + \log\hat{Z}\right),
\end{align*}

we obtain:
\begin{align*}
    \nabla_\theta \gL_{{\rm HVI}, f_2}(P_F, P_B) = \nabla_\theta \E_{\tau \sim P_F} \left[c(\tau) \right] = \E_{\tau \sim P_F} [ \nabla_\theta \log P_F(\tau) c(\tau) + \nabla_\theta c(\tau)], \\
    \nabla_\phi \gL_{{\rm HVI}, f_1}(P_F, P_B) = -\nabla_\phi \E_{\tau \sim P_B} \left[c(\tau) \right] = -\E_{\tau \sim P_B} [ \nabla_\phi \log P_B(\tau) c(\tau) + \nabla_\phi c(\tau)]. \\
\end{align*}
From \cref{eq:proof-prop-1,eq:proof-prop-2}, we obtain:
\begin{align*}
    \E_{\tau \sim P_F} [\nabla_\theta c(\tau)] = \E_{\tau \sim P_F} [\nabla_\theta \log P_F(\tau)] = \sum_{\tau \in \gT} P_F(\tau) \nabla_\theta \log P_F(\tau) =  \sum_{\tau \in \gT} \nabla_\theta  P_F(\tau) = \nabla_\theta 1 = 0
\end{align*}
Hence, for any scalar $Z > 0$, we can write:
\begin{align*}
    \E_{\tau \sim P_F} [\nabla_\theta c(\tau)] = 0 = \E_{\tau \sim P_F} [\nabla_\theta \log P_F(\tau) \log Z]
\end{align*}
and similarly
\begin{align*}
    \E_{\phi \sim P_F} [\nabla_\phi c(\tau)] = 0 = \E_{\tau \sim P_B} [\nabla_\phi \log P_B(\tau) \log Z].
\end{align*}
Plugging these two equalities back in the HVI gradients above, we obtain:
\begin{align*}
    \nabla_\theta \gL_{{\rm HVI}, f_2}(P_F, P_B) = \E_{\tau \sim P_F} [ \nabla_\theta \log P_F(\tau) \log \frac{Z P_F(\tau)}{R(x_\tau) P_B(\tau \mid x_\tau)}] \\
    \nabla_\phi \gL_{{\rm HVI}, f_1}(P_F, P_B) = - \E_{\tau \sim P_B} [ \nabla_\theta \log P_B(\tau) \log \frac{Z P_F(\tau)}{R(x_\tau) P_B(\tau \mid x_\tau)}]
\end{align*}
The last two equalities hold for any scalar $Z$ (that does not depend on the parameters of $P_F, P_B$, and that does not depend on any trajectory). In particular, the equations hold for the parameter $Z$ of the Trajectory Balance objective. It thus follows that:
\begin{align*}
    \nabla_\theta \gL_{{\rm HVI}, f_2}(P_F, P_B) = \frac{1}{2}\E_{\tau \sim P_F} \left[ \nabla_\theta \left( \log \frac{Z P_F(\tau)}{R(x_\tau) P_B(\tau \mid x_\tau)} \right)^2 \right] = \frac{1}{2}\E_{\tau \sim P_B} [\nabla_\theta \LTB(\tau; P_F, P_B, Z)] \\
    \nabla_\phi \gL_{{\rm HVI}, f_1}(P_F, P_B) = \frac{1}{2}\E_{\tau \sim P_B} \left[ \nabla_\theta \left( \log \frac{Z P_F(\tau)}{R(x_\tau) P_B(\tau \mid x_\tau)} \right)^2 \right] = \frac{1}{2}\E_{\tau \sim P_B} [\nabla_\phi \LTB(\tau; P_F, P_B, Z)]
\end{align*}
As an immediate corollary, we obtain that the expected on-policy TB gradient does not depend on the estimated partition function $Z$. 
\end{proof}

Next, we will prove the identity \cref{eq:tb_surrogate}, which we restate here:
\begin{equation}
\label{eq:on-policy-gradient-PB}
\EE_{\tau\sim P_F}[\nabla_\phi\LTB(\tau)]=\nabla_\phi\left[D_{\log^2}(P_B\|P_F)+2(\log Z-\log\hat Z)\KL(P_F\|P_B)\right].
\end{equation}

\begin{proof}
The RHS of \cref{eq:on-policy-gradient-PB} equals
\begin{align*}
     &\nabla_\phi \left[\E_{\tau \sim P_F} \left[\left(\log \frac{P_B(\tau \mid x_\tau) R(x_\tau)}{\hat{Z}P_F(\tau)} \right)^2 + 2(\log Z-\log\hat Z) \log \frac{P_F(\tau) \hat{Z}}{P_B(\tau \mid x_\tau) R(x_\tau)} \right] \right]\\
    =&\E_{\tau \sim P_F} \left[\nabla_\phi \left( \left(\log \frac{P_B(\tau \mid x_\tau) R(x_\tau)}{\hat{Z}P_F(\tau)} \right)^2 + 2(\log Z-\log\hat Z) \log \frac{P_F(\tau) \hat{Z}}{P_B(\tau \mid x_\tau) R(x_\tau)} \right)\right]  \\
    =&\E_{\tau \sim P_F} \left[2 \nabla_\phi \log P_B(\tau \mid x_\tau) \log \frac{P_B(\tau \mid x_\tau) R(x_\tau)}{\hat{Z}P_F(\tau)} - 2(\log Z-\log\hat Z) \nabla_\phi \log P_B(\tau \mid x_\tau)  \right]\\
    =& 2 \E_{\tau \sim P_F}\left[\nabla_\phi \log P_B(\tau \mid x_\tau)\log \frac{P_B(\tau \mid x_\tau) R(x_\tau)}{ZP_F(\tau)} \right] \\
    =& \EE_{\tau\sim P_F}[\nabla_\phi\LTB(\tau)]
\end{align*}\vskip-2.5em
\end{proof}

\section{A variational objective for subtrajectories}
\label{appendix:proposition-extension}
In this section, we extend the claim made in \cref{prop:TB-eq-HVI} to connect alternative GFlowNet losses to other variational objectives. \cref{prop:TB-eq-HVI} is thus a partial case of \cref{prop:extended-prop}. This provides an alternative proof to \cref{prop:TB-eq-HVI}.

\paragraph{The detailed balance objective (DB):} The loss proposed in \citep{bengio2021foundations} parametrizes a GFlowNet using its forward and backward policies $P_F$ and $P_B$ respectively, along with a state flow function $F$, which is a positive function of the states, that matches the target reward function on the \termin{} states. It decomposes as a sum of transition-dependent losses:
\begin{equation}
    \forall s \ra s' \in \sA\quad\LDB(s \ra s'; P_F, P_B, F) = 
    \left(\log \frac{F(s) P_F(s' \mid s)}{F(s') P_B(s \mid s')} \right)^2\text{, where $F(s') = R(s')$ if $s' \in \gX$}. \label{eq:DB-tran} 
\end{equation}

\paragraph{The subtrajectory balance objective (SubTB):} Both the DB and TB objectives can be seen as special instances of the subtrajectory balance objective \citep{malkin2022trajectory, subtb}. \citet{malkin2022trajectory} suggested instead of defining the state flow function $F$ for every state $s$, a state flow function could be defined on a subset of the state space $\gS$, called the \emph{hub states}. The loss can be decomposed into a sum of subtrajectory-dependent losses:
\begin{align}
    \forall \tau = (s_1, \dots, s_n) \in \gT^{\rm partial} \quad \LSTB(\tau; P_F, P_B, F) = \left(\log \frac{F(s_1) P_F(\tau)}{F(s_n) P_B(\tau \mid s_t)} \right)^2, \label{eq:SubTB}
\end{align}
where $P_F(\tau)$ is defined for partial trajectories similarly to complete trajectories \cref{eq:P_T}, $P_B(\tau \mid s) = \prod_{(s \ra s') \in \tau} P_B(s \mid s')$, and we again fix $F(x) = R(x)$ for \termin{} states $x \in \gX$). The SubTB objective reduces to the DB objective for subtrajectories of length 1 and to the TB objective for complete trajectories, in which case we use $Z$ to denote $F(s_0)$. 

\paragraph{A variational objective for transitions:} From now on, we work with a graded DAG $\gG=(\gS,\sA)$, in which the state space $\gS$ is decomposed into \emph{layers}: $\gS=\bigsqcup_{l=0}^{L} \gS_l$, with $\gS_0 = \{s_0\}$ and $\gS_L=\gX$.

HVI provides a class of algorithms to learn forward and backward policies on $\gG$. Rather than learning these policies ($P_F$ and $P_B$) using a variational objective requiring distributions over complete trajectories, nested variational inference \cite[NVI;][]{zimmermann2021nested}), which combines nested importance sampling and variational inference, defines an objective dealing with distributions over transitions, or edges. To this end, it makes use of positive functions $F_k$ of the states $s_k \in \gS_k$, for $k=0,\dots,L-1$, to define two sets of distributions $\check{p}_k$ and $\hat{p}_k$ over edges from $\gS_k$ to $\gS_{k+1}$:
\begin{equation}
\label{eq:probabilities-over-edges}
    \hat{p}_k(s_k \ra s_{k+1}) \propto F_k(s_k) P_F(s_{k+1} \mid s_k) \quad 
    \check{p}_k(s_k \ra s_{k+1}) \propto 
    \begin{cases}
    R(s_L) P_B(s_k \mid s_L)& k=L-1 \\
    F_{k+1}(s_{k+1}) P_B(s_k \mid s_{k+1})& \text{otherwise}
    \end{cases}.
\end{equation}
Learning the policies $P_F, P_B$ and the functions $F_k$ is done by minimizing losses of the form:
\begin{align}
    \label{eq:NVI-objective}
    \gL_{\rm NVI}(P_F, P_B, F) = \sum_{k=0}^{L-1} D_f( \check{p}_k \| \hat{p}_k)
\end{align}
The positive function $F_k$ plays the same role as the state flow function in GFlowNets (in the DB objective in particular). Before drawing the links between DB and NVI, we first propose a natural extension of NVI to subtrajectories.

\subsection{A variational objective for subtrajectories}
\label{sec:SubNVI}
Consider a graded DAG $\gG=(\gS, \sA)$ where $\gS=\bigsqcup_{l=0}^{L} \gS_l$, $\gS_0=\{s_0\}$, $\gS_L = \gX$. Amongst the $L+1$ layers $l=0, \dots, L$, we consider $K + 1 \leq L+1$ special layers, that we call \textit{junction layers}, of which the states are called \textit{hub states}. We denote by $m_0, \dots, m_{K}$ the indices of these layers, and we constrain $m_0=0$ to represent the layer comprised of the source state only, and $m_{K}=L$ representing the \termin{} states $\gX$. On each non-\termin{} junction layer $m_k \neq L$, we define a state flow function $F_k: \gS_{m_k} \rightarrow \R^*_+$. Given any forward and backward policies $P_F$ and $P_B$ respectively, consistent with the DAG $\gG$, the state flow functions define two sets of distributions $\check{p}_k$ and $\hat{p}_k$ over partial trajectories starting from a state $s_{m_k} \in \gS_{m_k}$ and ending in a state $s_{m_{k+1}} \in \gS_{m_{k+1}}$ (we denote by $\gT_k$ the set comprised of these partial trajectories, for $k=0 \dots K-1$):
\begin{align}
    &\forall \tau_k = (s_{m_k} \ra \dots \ra s_{m_{k+1}}) \in \gT_k \quad \hat{p}_k(\tau_k) \propto F_k(s_{m_k})P_F(\tau_k), \label{eq:SubNVI-fw}\\ 
    &\forall \tau_k = (s_{m_k} \ra \dots \ra s_{m_{k+1}}) \in \gT_k \quad \check{p}_k(\tau_k) \propto F_{k+1}(s_{m_{k+1}})P_B(\tau_k \mid s_{m_{k+1}}), \label{eq:SubNVI-bw}
\end{align}
where $F_K$ is fixed to the target reward function $R$.

\begin{lemma}
\label{lemma:SubNVI-eq}
If $\hat{p}_k = \check{p}_k$ for all $k=0\dots K-1$, then the forward policy $P_F$ induces a \termin{} state distribution $P_F^\top$ that matches the target unnormalized distribution (or reward function) $R$.
\end{lemma}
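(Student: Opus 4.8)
The plan is to reduce \Cref{lemma:SubNVI-eq} to the trajectory-balance consistency fact already recorded after \eqref{eq:TB}: if $P_F(\tau) = R(x_\tau)\,P_B(\tau \mid x_\tau)/\hat Z$ for every complete trajectory $\tau$, then $P_F^\top(x) = R(x)/\hat Z$. So the real content is to derive this trajectory-level identity from the hypothesis $\hat p_k = \check p_k$ for all $k = 0,\dots,K-1$.

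The first step is to pin down the normalizing constants in \eqref{eq:SubNVI-fw}--\eqref{eq:SubNVI-bw}. Since $\gG$ is graded, from any state $s \in \gS_{m_k}$ the forward policy reaches layer $m_{k+1}$ with total probability $1$ (every forward trajectory of the appropriate length lands in that layer), and symmetrically from any $s' \in \gS_{m_{k+1}}$ the backward policy reaches layer $m_k$ with total probability $1$. Hence the normalizer of $\hat p_k$ is $\sum_{s \in \gS_{m_k}} F_k(s)$ and that of $\check p_k$ is $\sum_{s' \in \gS_{m_{k+1}}} F_{k+1}(s')$; call these $Z_k$ and $Z_{k+1}$ respectively (consistent labelling, with the convention $F_K := R$, so that $Z_K = \hat Z$, and, since $\gS_0 = \{s_0\}$, $Z_0 = F_0(s_0)$). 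Equality of the two probability distributions over $\gT_k$ is then equivalent to the pointwise identity $Z_k^{-1} F_k(s_{m_k})\,P_F(\tau_k) = Z_{k+1}^{-1} F_{k+1}(s_{m_{k+1}})\,P_B(\tau_k \mid s_{m_{k+1}})$ holding for every partial trajectory $\tau_k \in \gT_k$.

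Next I would take an arbitrary complete trajectory $\tau = (s_0 \ra \dots \ra s_L)$, split it at the junction layers into $\tau_0,\dots,\tau_{K-1}$, multiply the $K$ pointwise identities above, and use the multiplicativity $P_F(\tau) = \prod_{k} P_F(\tau_k)$ and $P_B(\tau \mid x_\tau) = \prod_{k} P_B(\tau_k \mid s_{m_{k+1}})$. The intermediate flow values $F_1(s_{m_1}),\dots,F_{K-1}(s_{m_{K-1}})$ occur on both sides and cancel, the constants $Z_1,\dots,Z_{K-1}$ cancel likewise, and what survives is $Z_0^{-1} F_0(s_0)\,P_F(\tau) = Z_K^{-1} R(x_\tau)\,P_B(\tau \mid x_\tau)$. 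Substituting $Z_0 = F_0(s_0)$ and $Z_K = \hat Z$ gives exactly $P_F(\tau) = R(x_\tau)\,P_B(\tau \mid x_\tau)/\hat Z$. Summing over all $\tau$ with $x_\tau = x$ and using $\sum_{\tau : x_\tau = x} P_B(\tau \mid x) = 1$ (again a consequence of gradedness) then yields $P_F^\top(x) = R(x)/\hat Z \propto R(x)$, which is the claim.

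The only delicate points are bookkeeping rather than conceptual: correctly identifying the two normalizing constants — which hinges on the ``probability-one to reach the adjacent junction layer'' property supplied by gradedness — and verifying that the telescoping of the $F_k$'s and of the $Z_k$'s leaves precisely the two boundary terms $F_0(s_0)$ and $R(x_\tau)$. I do not expect a genuine obstacle here. An essentially equivalent alternative route would be to observe that $\hat p_k = \check p_k$ forces $\LSTB(\tau_k; P_F, P_B, F) = 0$ for every $\tau_k \in \gT_k$ after the harmless rescaling of each $F_k$ by the constant $Z_k$, and then invoke the known fact that a vanishing SubTB loss on a family of subtrajectories that covers every complete trajectory implies a vanishing TB loss on complete trajectories, which in turn gives $P_F^\top \propto R$.
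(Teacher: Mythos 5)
Your proof is correct and follows essentially the same route as the paper's: identify the normalizers of $\hat{p}_k$ and $\check{p}_k$ as $\sum_{s\in\gS_{m_k}}F_k(s)$ and $\sum_{s'\in\gS_{m_{k+1}}}F_{k+1}(s')$ via gradedness, multiply the $K$ pointwise equalities along a complete trajectory, telescope the intermediate $F_k$'s and normalizers to recover the trajectory balance identity $P_F(\tau)=R(x_\tau)P_B(\tau\mid x_\tau)/\hat Z$, and conclude. The only (harmless) difference is that you finish by summing over trajectories ending at $x$ and using $\sum_{\tau:x_\tau=x}P_B(\tau\mid x)=1$, whereas the paper cites \citet{malkin2022trajectory} for that final implication.
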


\begin{proof}
Consider a complete trajectory $\tau = (s_{m_0} \ra \dots \ra s_{m_1} \ra \dots \ra \dots s_{m_2} \ra \dots \ra \dots \ra s_{m_K})$. And let $\tau_k =(s_{m_k} \ra \dots \ra s_{m_{k+1}})$, for every $k < K$. 

Denote by $\hat{Z}_k$ and $\check{Z}_k$ the partition functions (constant of proportionality in \cref{eq:probabilities-over-edges}) of $\hat{p}_k$ and $\check{p}_k$ respectively, for every $k < K$. It is straightforward to see that for every $0 < k < K$:
\begin{align}
    \hat{Z}_{k+1} = \check{Z}_k = \sum_{s_{m_{k+1}} \in \gS_{m_{k+1}}} F_{k+1}(s_{m_{k+1}}) \label{eq:SubNVI-partitions}
\end{align}
\begin{align}
    \prod_{k=0}^{K-1} \hat{p}_k(\tau_k) = \frac{\prod_{k=0}^{K-1} F_k(s_{m_k})}{\prod_{k=0}^{K-1} \hat{Z}_k} P_F(\tau), \label{eq:lemma-proof-1} \\
    \prod_{k=0}^{K-1} \check{p}_k(\tau_k) = \frac{\prod_{k=0}^{K-1} F_{k+1}(s_{m_{k+1}})}{\prod_{k=0}^{K-1} \check{Z}_k} P_B(\tau \mid s_{m_K}). \label{eq:lemma-proof-2}
\end{align}
Because $\hat{p}_k = \check{p}_k$ for all $k=0\dots K-1$, then both right-hand sides of \cref{eq:lemma-proof-1,eq:lemma-proof-2} are equal. Combining this with \cref{eq:SubNVI-partitions}, we obtain:
\begin{align}
    \forall \tau \in \gT \quad \underbrace{\frac{F_0(s_0)}{\hat{Z}_0}}_{=1}P_F(\tau) = \frac{R(x_\tau)}{\sum_{x \in \gX} R(x)} P_B(\tau \mid x),
\end{align}
which implies the TB constraint is satisfied for all $\tau\in\gT$. \cite{malkin2022trajectory} shows that this is a sufficient condition for the \termin{} state distribution induced by $P_F$ to match the target reward function $R$, which completes the proof.
\end{proof}
Similar to NVI, we can use \cref{lemma:SubNVI-eq} to define objective functions for $P_F, P_B, F_k$, of the form:
\begin{equation}
    \label{eq:SubNVI}
    \gL_{{\rm SubNVI}, f}(P_F, P_B, F_{0:K-1}) = \sum_{k=1}^{K-1} D_f(\check{p}_k \| \hat{p}_k)
\end{equation}

Note that the \textit{{\rm SubNVI}} objective of \cref{eq:SubNVI} matches the NVI objective \citep{zimmermann2021nested} when all layers are junction layers (i.e. $K=L$, and $m_k =k$ for all $k \leq L$), and matches the HVI objective of \cref{eq:HVI-objective} when only the first and last layers are junction layers (i.e. $K=1$, $m_0 = 0$, and $m_1 = L$).

\subsection{An equivalence between the SubNVI and the SubTB objectives}
\begin{proposition}
\label{prop:extended-prop}
Given a graded DAG $\gG$ as in \cref{sec:GFN-preliminaries}, with junction layers $m_0=0, m_1, \dots, m_K=L$ as in \cref{sec:SubNVI}. For any forward and backward policies, and for any positive function $F_k$ defined for the hubs, consider $\hat{p}_k$ and $\check{p}_k$ defined in \cref{eq:SubNVI-fw,eq:SubNVI-bw}. The subtrajectory variational objectives of \cref{eq:SubNVI} are equivalent to the subtrajectory balance objective \cref{eq:SubTB} for specific choices of the $f$-divergences. Namely, denoting by $\theta, \phi$ the parameters of $P_F, P_B$ respectively:
\begin{align}
    \E_{\tau_k \sim \check{p}_k} [\nabla_\phi \LSTB(\tau_k; P_F, P_B, F)] = 2 \nabla_\phi D_{f_1}(\check{p}_k \| \hat{p}_k) \label{eq:extended-prop-1} \\
    \E_{\tau_k \sim \hat{p}_k} [\nabla_\theta \LSTB(\tau_k; P_F, P_B, F)] = 2 \nabla_\theta D_{f_2}(\check{p}_k \| \hat{p}_k) \label{eq:extended-prop-2}
\end{align}
where $F= F_{0:K-1}$, and $f_1: t \mapsto t \log t$ and $f_2: t \mapsto - \log t$.
\end{proposition}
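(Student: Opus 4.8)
The plan is to reproduce, one junction index $k$ at a time, the score-function computation of \Cref{appendix:proof}, treating the partial trajectory $\tau_k=(s_{m_k}\ra\dots\ra s_{m_{k+1}})$ exactly as the complete trajectory $\tau$ was treated there. First I would set $c_k(\tau_k)=\log\frac{F_k(s_{m_k})P_F(\tau_k)}{F_{k+1}(s_{m_{k+1}})P_B(\tau_k\mid s_{m_{k+1}})}$, so that $\LSTB(\tau_k;P_F,P_B,F)=c_k(\tau_k)^2$, and record the two elementary identities $\nabla_\theta c_k(\tau_k)=\nabla_\theta\log P_F(\tau_k)$ and $\nabla_\phi c_k(\tau_k)=-\nabla_\phi\log P_B(\tau_k\mid s_{m_{k+1}})$, valid because $F_k$ and $F_{k+1}$ carry no $\theta$- or $\phi$-dependence (the gradients in the statement are only with respect to $\theta,\phi$, so the flow functions are held fixed). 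Differentiating the square then gives $\nabla_\theta\LSTB(\tau_k)=2c_k(\tau_k)\nabla_\theta\log P_F(\tau_k)$ and $\nabla_\phi\LSTB(\tau_k)=-2c_k(\tau_k)\nabla_\phi\log P_B(\tau_k\mid s_{m_{k+1}})$.

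Next I would observe that the normalizing constants $\hat Z_k,\check Z_k$ of $\hat p_k,\check p_k$ are independent of $\theta$ and $\phi$: summing $F_k(s_{m_k})P_F(\tau_k)$ over all $\tau_k$ issuing from a fixed hub collapses the $P_F$-factor to $1$, so $\hat Z_k=\sum_{s\in\gS_{m_k}}F_k(s)$, and symmetrically $\check Z_k=\sum_{s\in\gS_{m_{k+1}}}F_{k+1}(s)$ — this is \cref{eq:SubNVI-partitions}. Hence $\log\frac{\hat p_k(\tau_k)}{\check p_k(\tau_k)}=c_k(\tau_k)+\log\check Z_k-\log\hat Z_k$ differs from $c_k(\tau_k)$ only by a parameter-independent constant, and the two relevant $f$-divergences become the plain KL divergences $D_{f_1}(\check p_k\|\hat p_k)=\KL(\check p_k\|\hat p_k)=-\E_{\tau_k\sim\check p_k}[c_k(\tau_k)]+\mathrm{const}$ and $D_{f_2}(\check p_k\|\hat p_k)=\KL(\hat p_k\|\check p_k)=\E_{\tau_k\sim\hat p_k}[c_k(\tau_k)]+\mathrm{const}$.

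The third ingredient is the REINFORCE identity combined with the vanishing baselines $\E_{\tau_k\sim\hat p_k}[\nabla_\theta\log P_F(\tau_k)]=0$ and $\E_{\tau_k\sim\check p_k}[\nabla_\phi\log P_B(\tau_k\mid s_{m_{k+1}})]=0$, which hold because, after pulling the $F$-weights out of the sum, each expectation reduces to $\sum_{\text{hub}}F(\text{hub})\,\nabla(1)=0$ — the same computation as $\E_{\tau\sim P_F}[\nabla_\theta c(\tau)]=0$ in \Cref{appendix:proof}, now carried out at a single junction. Applying REINFORCE to $\nabla_\theta\E_{\tau_k\sim\hat p_k}[c_k(\tau_k)]$ yields $\E_{\tau_k\sim\hat p_k}\!\left[\nabla_\theta\log P_F(\tau_k)\,c_k(\tau_k)+\nabla_\theta c_k(\tau_k)\right]$; the second term equals the vanishing baseline above, so what remains is $\E_{\tau_k\sim\hat p_k}[c_k(\tau_k)\nabla_\theta\log P_F(\tau_k)]=\tfrac12\E_{\tau_k\sim\hat p_k}[\nabla_\theta\LSTB(\tau_k)]$, establishing \cref{eq:extended-prop-2}. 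The derivation of \cref{eq:extended-prop-1} is identical after swapping the roles of $\theta/P_F/\hat p_k$ for $\phi/P_B/\check p_k$; the extra minus sign in $\nabla_\phi c_k$ is exactly compensated by the minus sign relating $D_{f_1}$ to $\E[c_k]$.

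I expect the only delicate step to be the bookkeeping in the second paragraph: one has to verify that $F_k$ really factors out of the sum defining $\hat Z_k$ (hence $\hat Z_k$ is $\theta$-independent), and likewise for $\check Z_k$ and $\phi$, and that no hidden parameter dependence enters through the terminal convention $F_K=R$ or through $F_0(s_0)$. Everything else is the manipulation already carried out for the base case, and specializing to $K=1$ (only the first and last layers are junctions) recovers \cref{prop:TB-eq-HVI}, so this argument also serves as the promised alternative proof of that proposition; taking every layer to be a junction instead recovers the DB/NVI equivalence.
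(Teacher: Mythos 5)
Your proposal is correct and follows essentially the same route as the paper's proof: define the log-ratio $c_k$, observe that the normalizers $\hat Z_k,\check Z_k$ are independent of $\theta,\phi$ so the $f$-divergences reduce to KLs of $c_k$ up to constants, and then apply the score-function identity together with the vanishing expected score to recover $\tfrac12\E[\nabla\LSTB]$. The only (immaterial) difference is which of the two identities you work out in full — you detail \cref{eq:extended-prop-2} and obtain \cref{eq:extended-prop-1} by symmetry, whereas the paper does the reverse.
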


\begin{proof}
For a subtrajectory $\tau_k=(s_{m_k} \ra \dots \ra s_{m_{k+1}}) \in \gT_k$, let $c(\tau_k) = \log \frac{F_{k}(s_{m_k}) P_F(\tau_k)}{F_{k+1}(s_{m_{k+1}}) P_B(\tau_k \mid s_{m_{k+1}})}$.

First, note that because $\hat{Z}_k$ and $\check{Z}_k$ are not functions of $\phi, \theta$ (\cref{eq:lemma-proof-1}):
\begin{align}
    \nabla_\phi c(\tau_k)&= - \nabla_\phi \log\frac{F_{k+1}(s_{m_{k+1}}) P_B(\tau_k \mid s_{m_{k+1}})}{\check{Z}_k} = - \nabla_\phi \log \check{p}_k(\tau_k) \label{eq:prop-proof-1} \\
    \nabla_\theta c(\tau_k) &= \nabla_\theta \log\frac{F_{k}(s_{m_{k}}) P_F(\tau_k)}{\hat{Z}_k} = \nabla_\phi \log \hat{p}_k(\tau_k) \label{eq:prop-proof-2}
\end{align}
We will prove \cref{eq:extended-prop-1}. The proof of \cref{eq:extended-prop-2} follows the same reasoning, and is left as an exercise for the reader.
\begin{align*}
    D_{f_1}(\check{p}_k \| \hat{p}_k) &= D_{KL}(\check{p}_k \| \hat{p}_k) \\
    \nabla_\phi D_{f_1}(\check{p}_k \| \hat{p}_k) &= \nabla_\phi \sum_{\tau_k \in \gT_k} \check{p}_k(\tau_k) \log \frac{\check{p}_k(\tau_k)}{\hat{p}_k(\tau_k)} \\
    &= - \nabla_\phi \sum_{\tau_k \in \gT_k} \check{p}_k(\tau_k) c(\tau_k) + \underbrace{\nabla_\phi \log\frac{\hat{Z}_k}{\check{Z}_k}}_{=0 \text{, according to \cref{eq:lemma-proof-1}}} \\
    &= - \sum_{\tau_k \in \gT_k} (\nabla_\phi \check{p}_k(\tau_k) c(\tau_k) + \check{p}_k(\tau_k) \nabla_\phi c(\tau_k)) \\
    &= - \sum_{\tau_k \in \gT_k} (\check{p}_k(\tau_k)  \nabla_\phi \log \check{p}_k(\tau_k) c(\tau_k) + \check{p}_k(\tau_k) \nabla_\phi c(\tau_k)) \\
    &= - \E_{\tau_k \sim \check{p}_k} [\nabla_\phi \log \check{p}_k(\tau_k) c(\tau_k)] + \sum_{\tau_k \in \gT_k}\check{p}_k(\tau_k) \nabla_\phi \log \check{p}_k(\tau_k) \quad \text{following \cref{eq:prop-proof-1}} \\
    &= -\E_{\tau_k \sim \check{p}_k} [\nabla_\phi \log P_B(\tau_k \mid s_{m_{k+1}}) c(\tau_k)] + \underbrace{\nabla_\phi \sum_{\tau_k \in \gT_k}\check{p}_k(\tau_k)}_{=0} \\
    &= \E_{\tau_k \sim \check{p}_k} \left[\nabla_\phi \log P_B(\tau_k \mid s_{m_{k+1}}) \log \frac{F_{k+1}(s_{m_{k+1}}) P_B(\tau_k \mid s_{m_{k+1}})}{F_{k}(s_{m_k}) P_F(\tau_k)}\right] \\
    &= \frac{1}{2} \E_{\tau_k \sim \check{p}_k} \left[ \nabla_\phi \left(\log \frac{F_{k}(s_{m_k}) P_F(\tau_k)} {F_{k+1}(s_{m_{k+1}}) P_B(\tau_k \mid s_{m_{k+1}})}  \right)^2 \right] \\
    &= \frac{1}{2} \E_{\tau_k \sim \check{p}_k}[\nabla_\phi \LSTB(\tau_k; P_F, P_B, F)]
\end{align*}\vskip-2.5em
\end{proof}

As a special case of \cref{prop:extended-prop}, when the state flow function is defined for $s_0$ only (and for the \termin{} states, at which it equals the target reward function), i.e. when $K=1$, the distribution $\hat{p}_0(\tau)$ and $P_F(\tau)$ are equal, and so are the distributions $\check{p}_0(\tau)$ and $P_B(\tau)$. We thus obtain the first two equations of \cref{prop:TB-eq-HVI} as a consequence of \cref{prop:extended-prop}.

\section{Additional experimental details}
\label{appendix:experimental-details}

\subsection{Hypergrid experiments}
\label{appendix:hypergrid}
\paragraph{Details about the environment}
For completeness, we provide more details about the environment, as explained in \citet{malkin2022trajectory}. In a $D$-dimension hypergrid of side length $H$, the state space $\gS$ is partitioned into the non-\termin{} states $\gS^o = \{0, \dots, H-1\}^D$ and \termin{} states $\gX = \gS^\top = \{0, \dots, H-1\}^D$. The initial state is $\bm{0}_{\R^D} = (0, \dots, 0) \in \gS^o$, and in addition to the transitions from a non-\termin{} state to another (by incrementing one coordinate of the state), an ``exit'' action is available for all $s \in \gS^o$, that leads to a \termin{} state $s^\top \in \gS^\top$. The reward at a \termin{} state $s^\top = (s^1, \dots, s^D)^\top$ is:
\begin{align}
    R(s^\top) = R_0 + 0.5 \prod_{d=1}^D \mathbf{1} \left[ \left\lvert \frac{s^d}{H-1} - 0.5 \right\rvert \in (0.25, 0.5] \right] + 2 \prod_{d=1}^D \mathbf{1} \left[ \left\lvert \frac{s^d}{H-1} - 0.5 \right\rvert \in (0.3, 0.4) \right],
\end{align}
where $R_0$ is an exploration parameter (lower values indicate harder exploration).

\paragraph{Architectural details} The forward and backward policies are parametrized as neural networks with 2 hidden layers of 256 units each. The neural networks take as input a one-hot representation of a a state (also called K-hot, or multi-hot representations), which is a $H \times D$ vector including exactly $D$ ones and $(H-1)D$ zeros, and output the logits of $P_F$ and $P_B$ respectively. Forbidden actions (e.g. when a coordinate is already maxed out at $H-1$) are masked out by setting the corresponding logits to $-\infty$ after the forward pass. Unlike \citet{malkin2022trajectory}, we do not tie the parameters of $P_F$ and $P_B$.

\paragraph{Behavior policy} The behavior policy is obtained from the forward policy $P_F$ by subtracting a scalar $\epsilon$ from the logits output by the forward policy neural network. The value of $\epsilon$ is decayed from $\epsilon_{init}$ to $0$ following a cosine annealing schedule \citep{losh2017sgdr}, and the value $\epsilon=0$ is reached at an iteration $T_{max}$. The values of $\epsilon_{init}$ and $T_{max}$ were treated as hyperparamters.

\paragraph{Hyperparameter optimization} Our experiments have shown that HVI objectives were brittle to the choice of hyperparameters (mainly learning rates), and that the ones used for Trajectory Balance in \citet{malkin2022trajectory} do not perform as well in the larger $128\times128$ grid we considered. To obtain a fair comparison between GFlowNets and HVI methods, a particular care was given to the optimization of hyperparameters in this domain. The optimization was performed in two stages:
\begin{enumerate}[left=0pt]
    \item We use a batch size of $64$ for all learning objectives, whether on-policy or off-policy, and the Adam optimizer with secondary parameters set to their default values, for the parameters of $P_F$, the parameters of $P_B$, and $\log Z$ (which is initialized at $0$). The learning rates of $P_F, P_B, \log Z$, along with a schedule factor $\gamma < 1$ by which they are multiplied when the JSD plateaus for more than 500 iterations (i.e. $500\times64$ trajectories sampled), were sought after separately for each combination of learning objective and sampling method (on-policy or off-policy), using a Bayesian search with the JSD evaluated at $200K$ trajectories as an optimization target. The choice of the baseline for HVI methods (except WS, that does not have a score function estimator of the gradient) was treated as a hyperparameter as well.
    \item All objectives were then trained for $10^6$ trajectories using all the combinations of hyperparameters found in the first stage, for $5$ seeds each. The final set of hyperparameters for each objective and sampling mode was then chosen as the one that leads to the lowest area under the JSD curve (approximated with the trapezoids method).
\end{enumerate} 

For off-policy runs, $T_{max}$ was defined as a fraction $1/n$ of the total number of iterations (which is equal to $10^6 / 64$). The value of $n$ and $\epsilon_{init}$ was optimized the same way as the learning rate and the schedule, as described above.

In \cref{fig:importance_of_baseline}, we illustrate the differences between the two types of baselines considered (\textbf{global} and \textbf{local}) for the 3 algorithms that use a score function estimator of the gradient, both on-policy and off-policy. 

\paragraph{Smaller environments:} The environment studied in the main body of text ($128\times128$, with $R_0 = 10^{-3}$) already illustrates some key differences between the Forward and Reverse KL objectives. As a sanity check for the HVI methods that failed to converge in this challenging environment, we consider two alternative grids: $64\times64$ and $8\times8\times8\times8$, both with an easier exploration parameter ($R_0 = 0.1$), and compare the 5 algorithms on-policy on these two extra domains. Additionally, for the two-dimensional domain ($64\times64$), we illustrate in \cref{fig:hypergrid-small-domains} a visual representation of the average distribution obtained after sampling $10^6$ trajectories, for each method separately. Interestingly, unlike the hard exploration domain, the two algorithms with the mode-seeking KL (\algname{Reverse KL} and \algname{Reverse WS}) converge to a lower JSD than the mean-seeking KL algorithms (\algname{Forward KL} and \algname{WS}), and are on par with TB.

\begin{figure}[t]
    \centering
    \includegraphics[width=0.99\textwidth,clip]{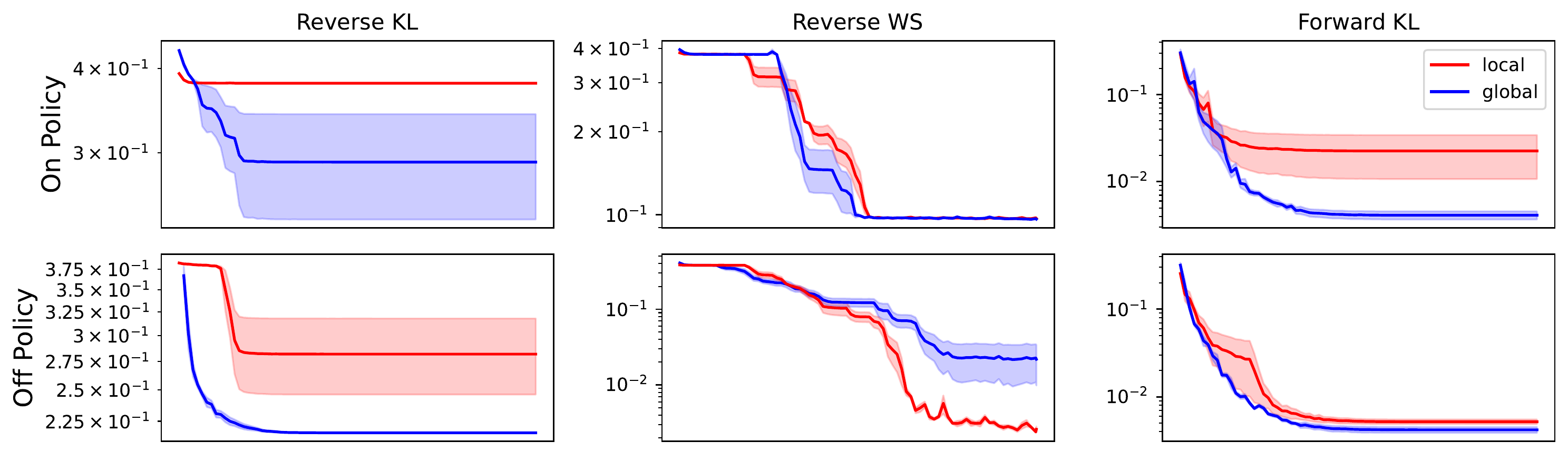}
    \caption{A comparison of the the type of baseline used (local or global) for the three HVI algorithms that use a score function estimator of the gradient.}
    \label{fig:importance_of_baseline}
\end{figure}

\begin{figure}
\vspace*{-1em}
\centering
    \includegraphics[width=0.95\textwidth]{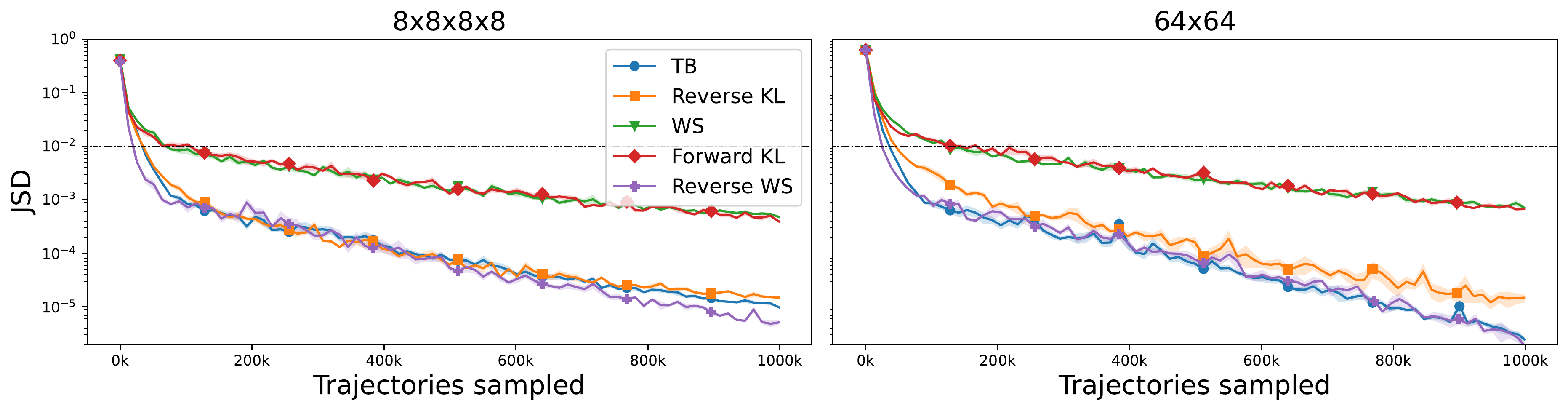}
\includegraphics[width=0.95\textwidth]{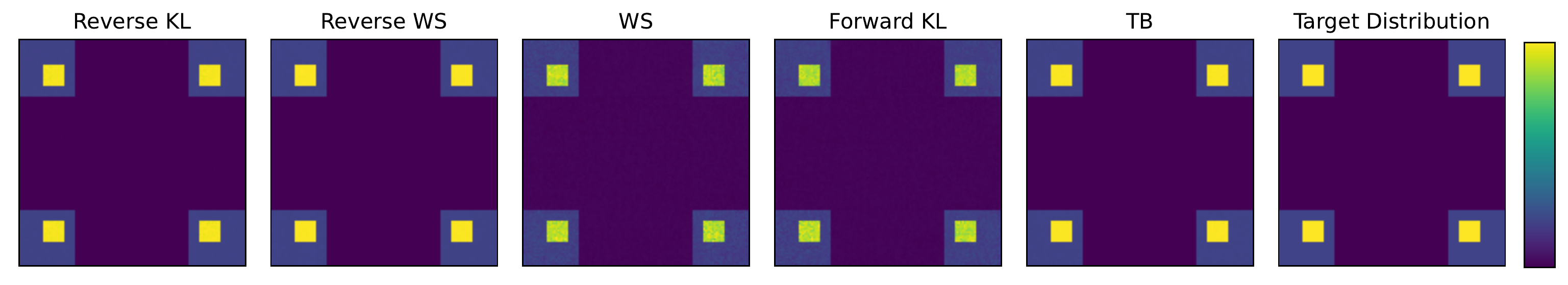}

\caption{\textbf{Top:} The evolution of the JSD between the learned sampler $P_F^\top$ and the target distribution on the $8\times8\times8\times8$ grid \textbf{left} and the $64\times64$ grid \textbf{right}. Trajectories are sampled on-policy. Shaded areas represent the standard error evaluated across 5 different runs \textbf{Bottom:} The average (across 5 runs) final learned distribution $P_F^\top$ for the different algorithms, along with the target distribution.  To amplify variation, the plot intensity at each grid  position is resampled from the Gaussian approximating the distribution over the 5 runs.\\[-2em]}
\label{fig:hypergrid-small-domains}
\end{figure}

\subsection{Molecule experiments}
\label{appendix:molecule}

Most experiment settings were identical to those of \citet{malkin2022trajectory}, in particular, the reward model $f$ the held-out set of molecules used to compute the performance metric, the GFlowNet model architecture (a graph neural network introduced by by \citet{bengio2021flow}), and the off-policy exploration rate. All models were trained with the Adam optimizer and batch size 4 for a maximum of 50000 batches. The metric was computed after every 5000 batches and the last computed value of the metric was reported, which was sometimes not the value after 50000 batches when the training runs terminated early because of numerical errors.

\subsection{Bayesian structure learning experiments}
\label{appendix:dags}

\paragraph{Bayesian Networks} A Bayesian Network is a probabilistic model where the joint distribution over $d$ random variables $\{X_{1}, \ldots, X_{d}\}$ factorizes according to a directed acyclic graph (DAG) $G$:
\begin{equation*}
p(X_{1}, \ldots, X_{d}) = \prod_{i=1}^{d}p(X_{i}\mid \mathrm{Pa}_{G}(X_{i})),
\end{equation*}
where $\mathrm{Pa}_{G}(X_{i})$ is the set of parent variables of $X_{i}$ in the graph $G$. Each conditional distribution in the factorization above is also associated with a set of parameters $\theta \in \Theta$. The structure $G$ of the Bayesian Network is often assumed to be known. However, when the structure is unknown, we can learn it based on a dataset of observation $\gD$: this is called \emph{structure learning}.

\paragraph{Structure of the state space} We use the same structure of graded DAG $\gG$ as the one described in \citep{deleu2022bayesian}, where each state of $\gG$ is itself a DAG $G$, and where actions correspond to adding one edge to the current graph $G$ to transition to a new graph $G'$. Only the actions maintaining the acyclicity of $G'$ are considered valid; this ensures that all the states are well-defined DAGs, meaning that all the states are \termin{} here (we define a distribution over DAGs). Similar to the hypergrid environment, the action space also contains an extra action ``stop'' to terminate the generation process, and return the current graph as a sample of our distribution; this ``stop'' action is denoted $G\rightarrow G^{\top}$, to follow the notation introduced in \cref{sec:GFN-preliminaries}.

\paragraph{Reward function} Our objective in Bayesian structure learning is to approximate the posterior distribution over DAGs $p(G\mid \gD)$, given a dataset of observations $\gD$. Since our goal is to find a forward policy $P_{F}$ for which $P_{F}^{\top}(G) \propto R(G)$ (see \cref{sec:GFN-preliminaries}), we can define the reward function as the joint distribution $R(G) = p(G, \gD) = p(\gD \mid G)p(G)$, where $p(G)$ is a prior over graphs (assumed to be uniform throughout the paper), and $p(\gD\mid G)$ is the marginal likelihood. Since the marginal likelihood involves marginalizing over the parameters of the Bayesian Network
\begin{equation*}
    p(\gD\mid G) = \int_{\Theta}p(\gD\mid \theta, G)p(\theta\mid G)\,d\Theta,
\end{equation*}
it is in general intractable. We consider here a special class of models, called \emph{linear-Gaussian models}, where the marginal likelihood can be computed in closed form; for this class of models, the log-marginal likelihood is also called the BGe score \citep{geiger1994bge,kuipers2014addendumbge} in the structure learning literature.

For each experiment, we sampled a dataset $\gD$ of $100$ samples from a randomly generated Bayesian network. The (ground truth) structure of the Bayesian Network was generated following an Erd\H{o}s-R\'{e}nyi model, with about $d$ edges on average (to encourage sparsity on such small graphs with $d\leq 5$). Once the structure is known, the parameters of the linear-Gaussian model were sampled randomly from a standard Normal distribution $\gN(0, 1)$. See \citep{deleu2022bayesian} for more details about the data generation process. For each setting (different values of $d$) and each objective, we repeated the experiment over 20 different seeds.

\paragraph{Forward policy} \citet{deleu2022bayesian} parametrized the forward policy $P_{F}$ using a linear transformer, taking all the $d^{2}$ possible edges in the graph $G$ as an input, and returning a probability distribution over those edges, where the invalid actions were masked out. We chose to parametrize $P_{F}$ using a simpler neural network architecture, based on a graph neural network \citep{battaglia2018gnn}. The GNN takes the graph $G$ as an input, where each node of the graph is associated with a (learned) embedding, and it returns for each node $X_{i}$ a pair of embeddings $\bm{u}_{i}$ and $\bm{v}_{i}$. The probability of adding an edge $X_{i} \rightarrow X_{j}$ to transition from $G$ to $G'$ (given that we do not terminate in $G$) is then given by
\begin{equation*}
P_{F}(G'\mid G, \neg G^{\top}) \propto \exp(\bm{u}_{i}^{\top}\bm{v}_{j}),
\end{equation*}
assuming that $X_{i}\rightarrow X_{j}$ is a valid action (i.e., it doesn't introduce a cycle in $G$), and where the normalization depends only on all the valid actions. We then use a hierarchical model to obtain the forward policy $P_{F}(G'\mid G)$, following \citep{deleu2022bayesian}:
\begin{equation*}
    P_{F}(G'\mid G) = (1 - P_{F}(G^{\top}\mid G))P_{F}(G'\mid G, \neg G^{\top}).
\end{equation*}
Recall that the backward policy $P_{B}$ is fixed here, as the uniform distribution over the parents of $G$ (i.e. all the graphs were exactly one edge has been removed from $G$).

\paragraph{(Modified) Detailed Balance objective} For completeness, we recall here the modified Detailed Balance (DB) objective \citep{deleu2022bayesian} as a special case of the DB objective (\citealp{bengio2021foundations}; see also \cref{eq:DB-tran}) when all the states of $\gG$ are \termin{} (which is the case in our Bayesian structure learning experiments):
\begin{equation*}
    \gL_{(M)DB}(G\rightarrow G'; P_{F}, P_{B}) = \left(\log \frac{R(G')P_{B}(G\mid G')P_{F}(G^{\top}\mid G)}{R(G)P_{F}(G'\mid G)P_{F}(G^{\top}\mid G)}\right)^{2}.
\end{equation*}

\paragraph{Optimization} Following \citep{deleu2022bayesian}, we used a replay buffer for all our off-policy objectives ((Modified) DB, TB, and \algname{Reverse KL}). All the objectives were optimized using a batch size of 256 graphs sampled either on-policy from $P_{F}$, or from the replay buffer. We used the Adam optimizer, with the best learning rate found among $\{10^{-6}, 3\times 10^{-6}, 10^{-5}, 3\times 10^{-5}, 10^{-4}\}$. For the TB objective, we learned $\log Z$ using SGD with a learning rate of $0.1$ and momentum $0.8$.

\begin{figure}[t]
    \centering
    \includegraphics[width=\linewidth]{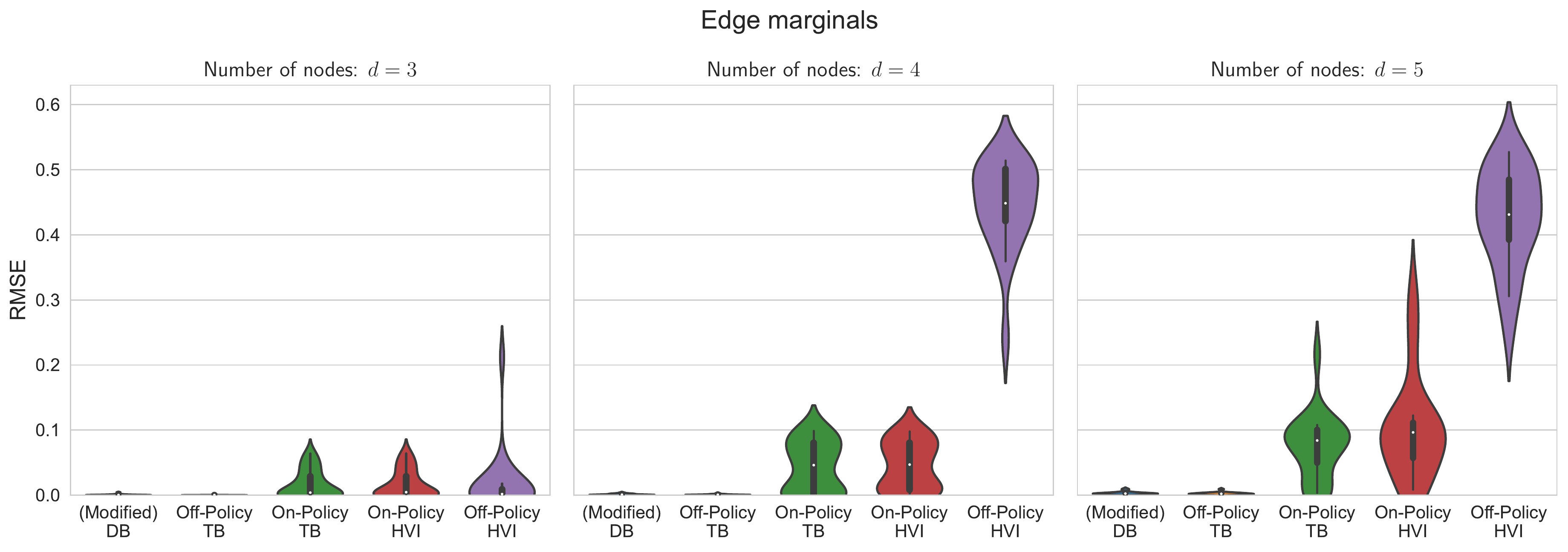}
    \caption{Comparison of edge marginals computed using the target posterior distribution and using the posterior approximations found either with the GFlowNet objectives, or \algname{Reverse KL}. Performance is reported as the Root Mean Square Error (RMSE) between the marginals (lower is better).}
    \label{fig:dag_edge_marginals}
\end{figure}

\paragraph{Edge marginals} In addition to the Jensen-Shannon divergence (JSD) between the true posterior distribution $p(G\mid \gD)$ and the posterior approximation $P_{F}^{\top}(G)$ (see \cref{appendix:metrics} for details about how this divergence is computed), we also compare the edge marginals computed with both distributions. That is, for any edge $X_{i} \rightarrow X_{j}$ in the graph, we compare
\begin{align*}
    p(X_{i} \rightarrow X_{j}\mid \gD) &= \sum_{G\mid X_{i}\in \mathrm{Pa}_{G}(X_{j})} p(G\mid \gD) & &\textrm{and} & P_{F}^{\top}(X_{i} \rightarrow X_{j}) &= \sum_{G\mid X_{i}\in \mathrm{Pa}_{G}(X_{j})}P_{F}^{\top}(G).
\end{align*}
The edge marginal quantifies how likely an edge $X_{i}\rightarrow X_{j}$ is to be present in the structure of the Bayesian Network, and is of particular interest in the (Bayesian) structure learning literature. To measure how accurate the posterior approximation $P_{F}^{\top}$ is for the different objectives considered here, we use the Root Mean Square Error (RMSE) between $p(X_{i}\rightarrow X_{j}\mid \gD)$ and $P_{F}^{\top}(X_{i}\rightarrow X_{j})$, for all possible pairs of nodes $(X_{i}, X_{j})$ in the graph.

\cref{fig:dag_edge_marginals} shows the RMSE of the edge marginals, for different GFlowNet objectives and \algname{Reverse KL} (denoted as HVI here for brevity). The results on the edge marginals largely confirm the observations made in \cref{sec:dags}: the off-policy GFlowNet objectives ((Modified) DB \& TB) consistently perform well across all experimental settings; On-Policy TB \& On-Policy \algname{Reverse KL} perform similarly and degrade as the complexity of the experiment increases (as $d$ increases); and Off-Policy \algname{Reverse KL} has a performance that degrades the most as the complexity increases, where the edge marginals given by $P_{F}^{\top}(X_{i}\rightarrow X_{j})$ do not match the true edge marginals $p(X_{i}\rightarrow X_{j}\mid \gD)$ accurately.

\section{Metrics}
\label{appendix:metrics}

\paragraph{Evaluation of the \termin{} state distribution $P_F^\top$: }
When the state space is small enough (e.g. graphs with $d \leq 5$ nodes in the Structure learning experiments, or a 2-D hypergrid with length $128$, as in the Hypergrid experiments), we can propagate the flows in order to compute the \termin{} state distribution $P_F^\top$ from the forward policy $P_F$. This is done using a flow function $F$ defined recursively:
\begin{align}
    F(s') = \begin{cases}
    1 & \text{if } s' = s_0 \\
    \sum_{s \in Par(s')} F(s) P_F(s' \mid s) & \text{otherwise }
    \end{cases}
\end{align}
 $P_F^\top$ is then  given by:
\begin{align}
    P_F^\top(s^\top) \propto F(s) P_F(s^\top \mid s),
\end{align}
The recursion can be carried out by dynamic programming, by enumerating the states in any topological ordering consistent with the graded DAG $\gG$. In particular, computation of the flow at a given terminating state $s$ is linear in the number of states and actions that lie on trajectories leading to $s$, and computation of the full distribution $P_F^\top$ is linear in $|\gS|+|\sA|$. 

\paragraph{Evaluation of the Jensen-Shannon divergence (JSD)} Similarly, when the state space is small enough, the target distribution $P^\top = R/Z^*$ can be evaluated exactly, given that the marginalization is over $\gX$ only. The JSD is a symmetric divergence, thus motivating our choice. The JSD can directly be evaluated as:
\begin{align}
    JSD(P^\top\|P_F^\top) &= \frac{1}{2} \left(\KL(P^\top \| M ) + \KL (P_F^\top \| M )\right) \quad \text{where } M= (P^\top + P_F^\top)/2
    \\
    &=\frac{1}{2} \sum_{s \in \gS^o} \left( P^\top(s) \log \frac{2 P^\top(s)}{P^\top(s) + P_F^\top(s)} + P_F^\top(s) \log \frac{2 P_F^\top(s)}{P^\top(s) + P_F^\top(s)} \right)
\end{align}

\newpage 

\section{Extension to continuous domains}
\label{appendix:continuous}

As a first step towards understanding GFlowNets with continuous action spaces, we perform an experiment on a stochastic control problem. The goal of this experiment is to explore whether the observations in the main text may hold in continuous settings as well.

We consider an environment in which an agent begins at the point $\mathbf{x}_0=(0,0)$ in the plane and makes a sequence of $K=10$ steps over the time interval $[0,1]$, through points $\mathbf{x}_{0.1},\mathbf{x}_{0.2},\dots,\mathbf{x}_1$. Each step from $\mathbf{x}_t$ to $\mathbf{x}_{t+0.1}$ is Gaussian with learned mean depending on $\mathbf{x}_t$ and $t$ and with fixed variance; the variance is isotropic with standard deviation $\frac{1}{2\sqrt{K}}$. Equivalently, the agent samples the Euler-Maruyama discretization with interval $\Delta t=\frac{1}{K}$ of the It\^o stochastic differential equation
\begin{equation}
d\mathbf{x}_t=f(\mathbf{x}_t,t)\,dt + \frac12d\mathbf{w}_t,
\end{equation}
where $\mathbf{w}_t$ is the two-dimensional Wiener process. 

The choice of the drift function $f$ determines the marginal density of the final point, $\mathbf{x}_1$. We aim to find $f$ such that this marginal density is proportional to a given reward function, in this case a quantity proportional to the density function of the standard \texttt{8gaussians} distribution, shown in \Cref{fig:continuous_comparison}. We scale the distribution so that the modes of the 8 Gaussian components are at a distance of 2 from the origin and their standard deviations are 0.25.

In GFlowNet terms, the set of states is $\gS=\{(\boldsymbol{0},0)\}\cup\{(\mathbf{x},t):\mathbf{x}\in\mathbb{R}^2,t\in\{0.1,0.2,\dots,1\}\}$. States with $t=1$ are \termin. There is an action from $(\mathbf{x},t)$ to $(\mathbf{x}',t')$ if and only if $t'=t+\Delta t$. The forward policy is given by a conditional Gaussian:
\begin{equation}
P_F((\mathbf{x}',t+\Delta t)\mid(\mathbf{x},t))=\mathcal{N}\left(\mathbf{x}'-\mathbf{x};f(\mathbf{x},t)\Delta t,\left(\frac{\sqrt{\Delta t}}{2}\right)^2\right).
\end{equation}
We impose a conditional Gaussian assumption on the backward policy as well, i.e., 
\begin{equation}
P_B((\mathbf{x},t)\mid(\mathbf{x}',t+\Delta t))=\begin{cases}\mathcal{N}\left(\mathbf{x}-\mathbf{x}';\mu_B(\mathbf{x}',t+\Delta t)\Delta t,\sigma^2_B(\mathbf{x}',t+\Delta t)\Delta t\right)&t\neq0\\1&t=0\end{cases},
\end{equation}
where $\mu_B$ and $\log\sigma^2_B$ are learned. Notice that all the policies, except the backward policy from time $\frac 1K$ to time 0, now represent probability \emph{densities}; states can have uncountably infinite numbers of children and parents.

We parametrize the three functions $f,\mu_B,\log\sigma^2_B$ as small (two hidden layers, 64 units per layer) MLPs taking as input the position $\mathbf{x}$ and an embedding of the time $t$. Their parameters can be optimized using any of the five algorithms in Table~\ref{tab:inference_objectives} of the main text.\footnote{We conjecture (and strongly believe under mild assumptions) but do not prove that the necessary GFlowNet theory continues to hold when probabilities are placed by probability densities; the results obtained here are evidence in support of this conjecture.} \Cref{fig:continuous_good_model} shows the marginal densities of $\mathbf{x}_t$ (estimated using KDE) for different $t$ in one well-trained model, as well as some sampled points and paths.

In addition to training on policy, we consider exploratory training policies that add Gaussian noise to the mean of each transition distribution. We experiment with adding standard normal noise scaled by $\sigma_{\rm exp}$, where $\sigma_{\rm exp}\in\{0,0.1,0.2\}$.

\Cref{fig:continuous_comparison} compares the marginal densities obtained using different algorithms with on-policy and off-policy training. The algorithms that use a forward KL objective to learn $P_B$ -- namely, \algname{Reverse WS} and \algname{Forward KL} -- are not shown because they encounter NaN values in the gradients early in training, even when using a $10\times$ lower learning rate than that used for all other algorithms ($10^{-3}$ for the parameters of $f$, $\mu_B$, $\log\sigma^2_B$ and $10^{-1}$ for the $\log Z$ parameter of the GFlowNet).

These results suggest that the observations made for discrete-space GFlowNets in the main text may continue to hold in continuous settings. The first two rows of \Cref{fig:continuous_comparison} show that off-policy exploration is essential for finding the modes and that TB achieves a better fit to the target distribution. Just as in \Cref{fig:hypergrid}, although all modes are found by \algname{Wake-sleep}, they are modeled with lower precision, appearing off-centre and having an oblong shape, which is reflected in the slightly higher MMD.

\begin{figure}[t]
\centering
\includegraphics[width=\textwidth,trim=100 33 100 27,clip]{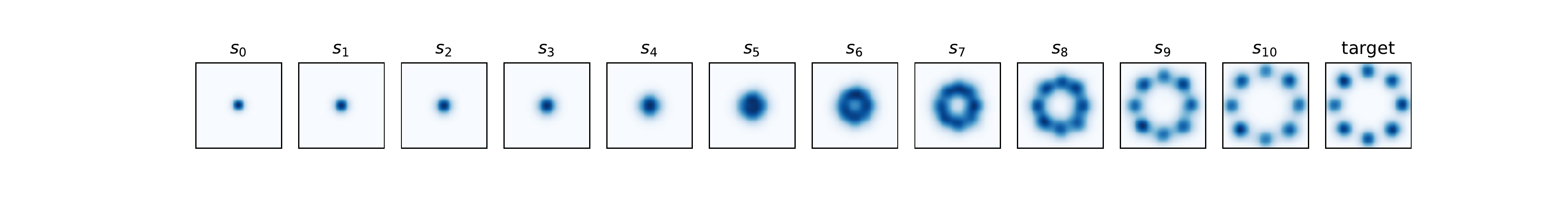}\\
\includegraphics[height=3cm]{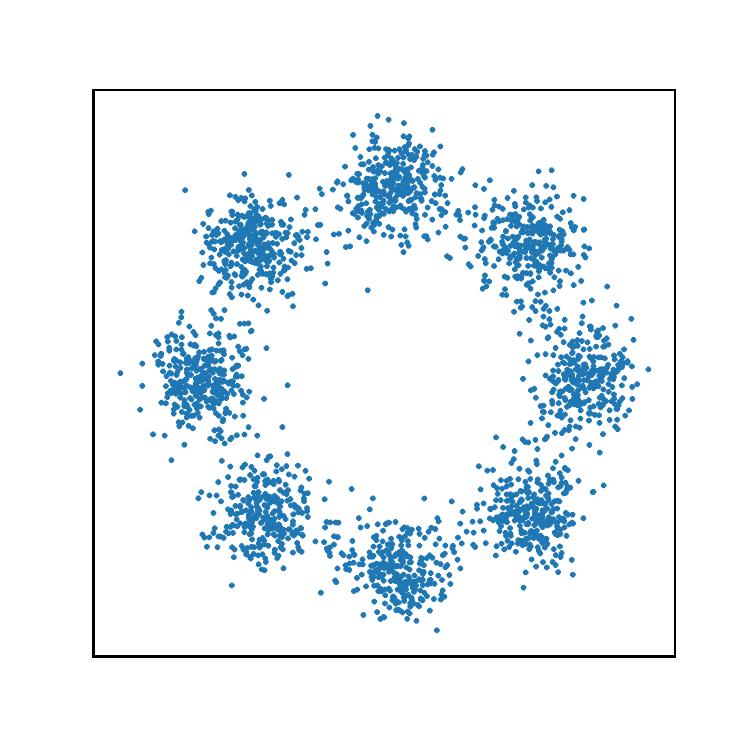}
\includegraphics[height=3cm]{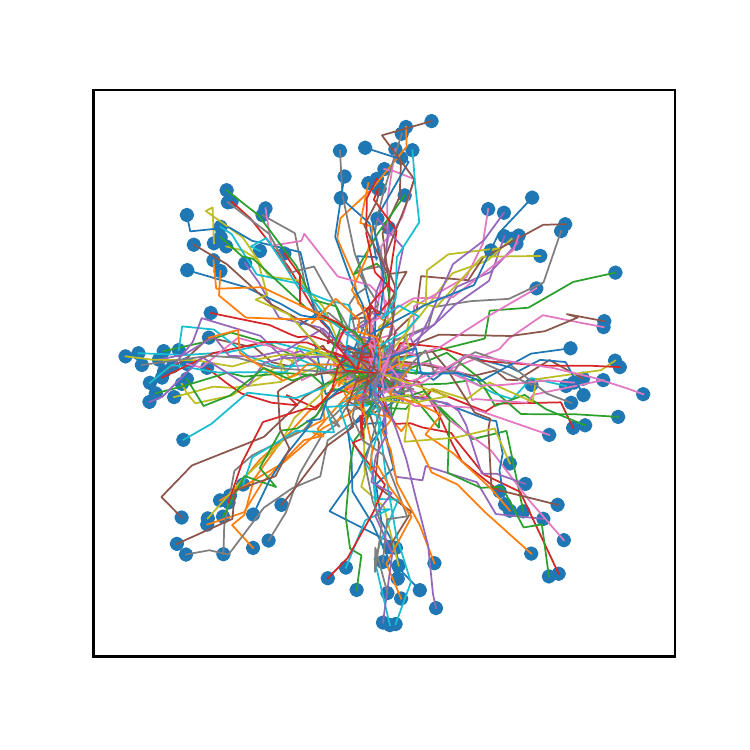}
\caption{\textbf{Above:} KDE (2560 samples, bandwidth 0.25) of the agent's position after $i$ steps for $i=0,1,\dots,10$ ($t=0,0.1,\dots,1$) for a model trained with off-policy TB, showing a close match to the target distribution (also convolved with the KDE kernel for fair comparison). \textbf{Below:} A sample of 2560 points from the trained model and the trajectories taken by 128 of the points.}
\label{fig:continuous_good_model}
\end{figure}

\begin{figure}[t]
\centering
\begin{tabular}{lccc}
&On-Policy&\multicolumn{2}{c}{Off-Policy}\\\cmidrule(lr){2-2}\cmidrule(lr){3-4}
Algorithm&$\sigma_{\rm exp}=0$&$\sigma_{\rm exp}=0.1$&$\sigma_{\rm exp}=0.2$\\\midrule
\raisebox{1.5cm}{\algname{TB}}        
&\includegraphics[width=3cm]{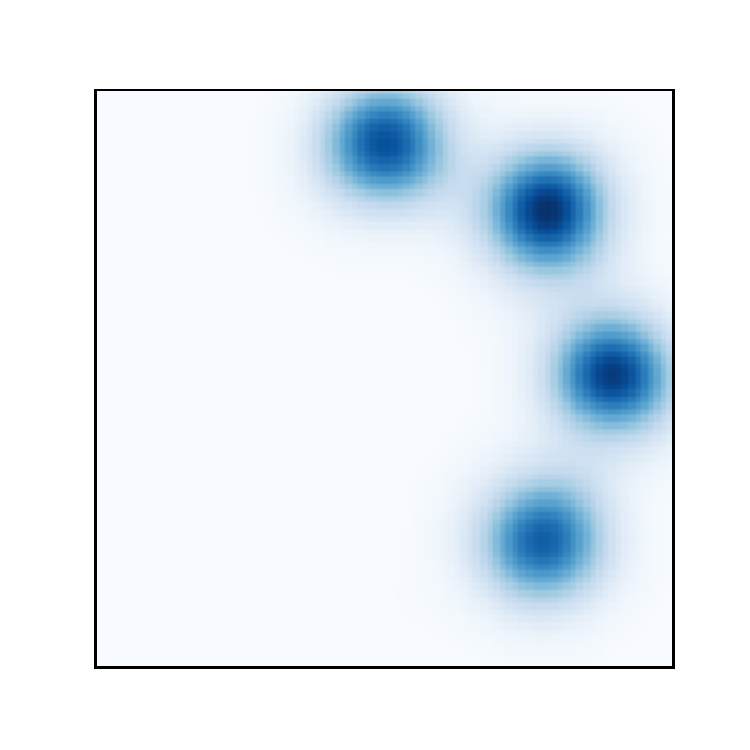}
&\includegraphics[width=3cm]{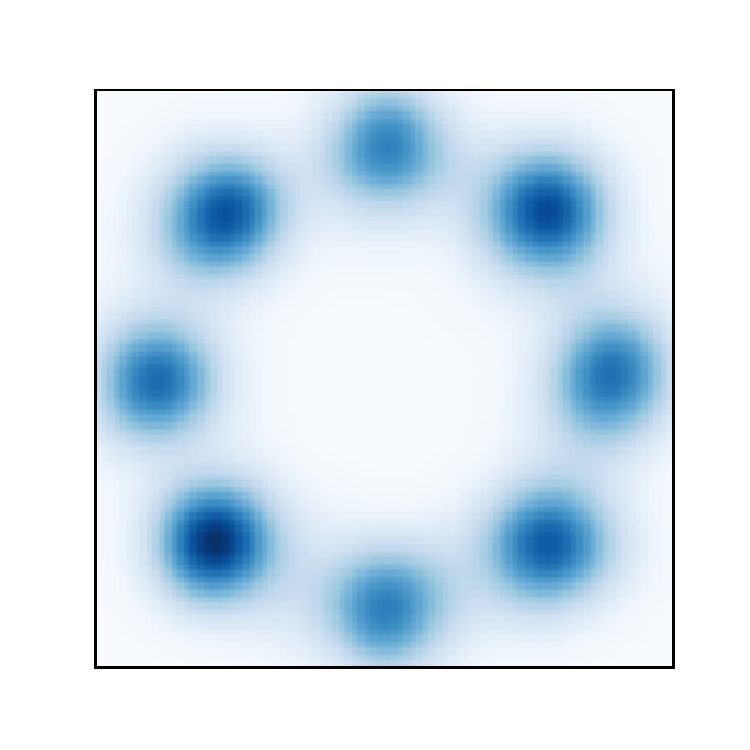} 
&\includegraphics[width=3cm]{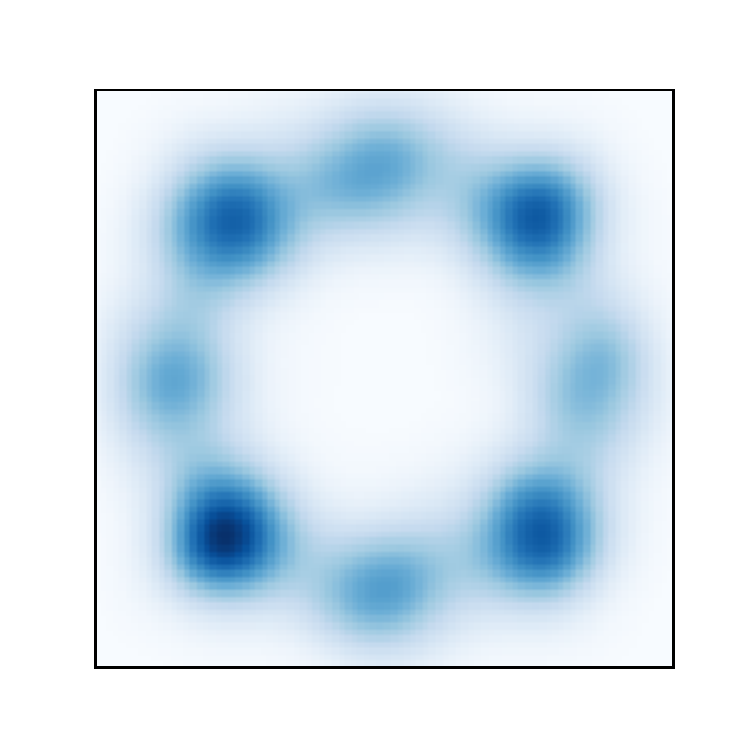} 
\\MMD
&0.1111
&0.0005
&0.0075
\\\hline
\raisebox{1.5cm}{\algname{Reverse KL}}
&\includegraphics[width=3cm]{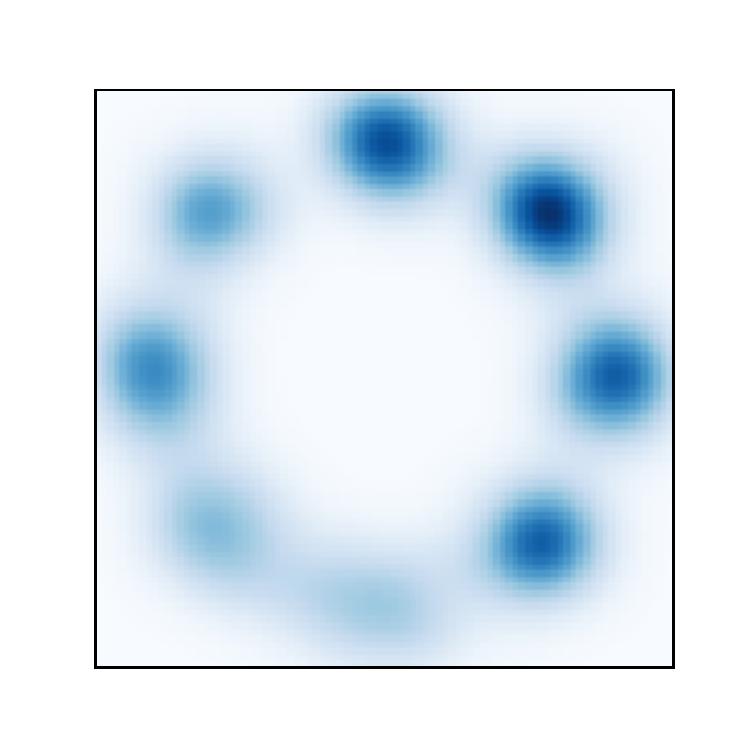} 
&\includegraphics[width=3cm]{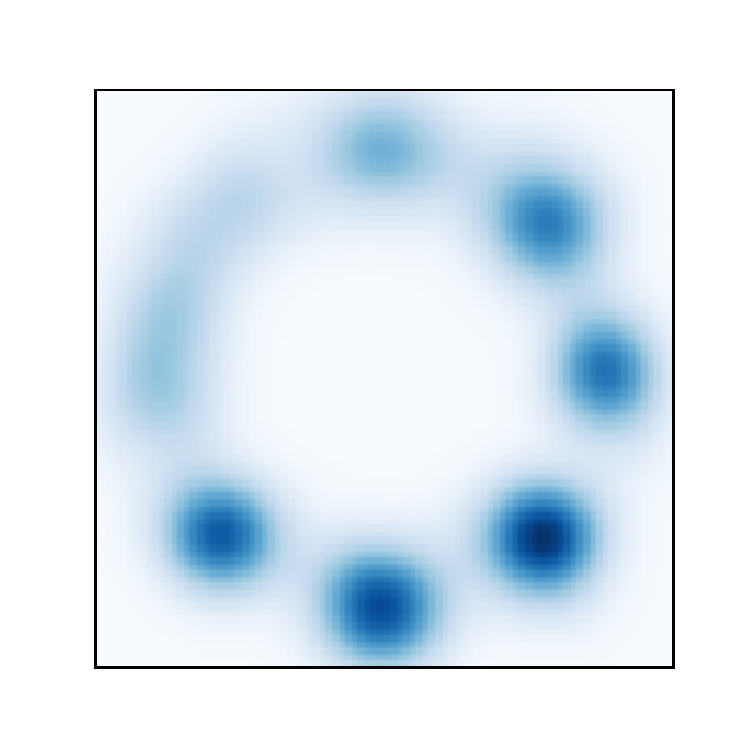} 
&\includegraphics[width=3cm]{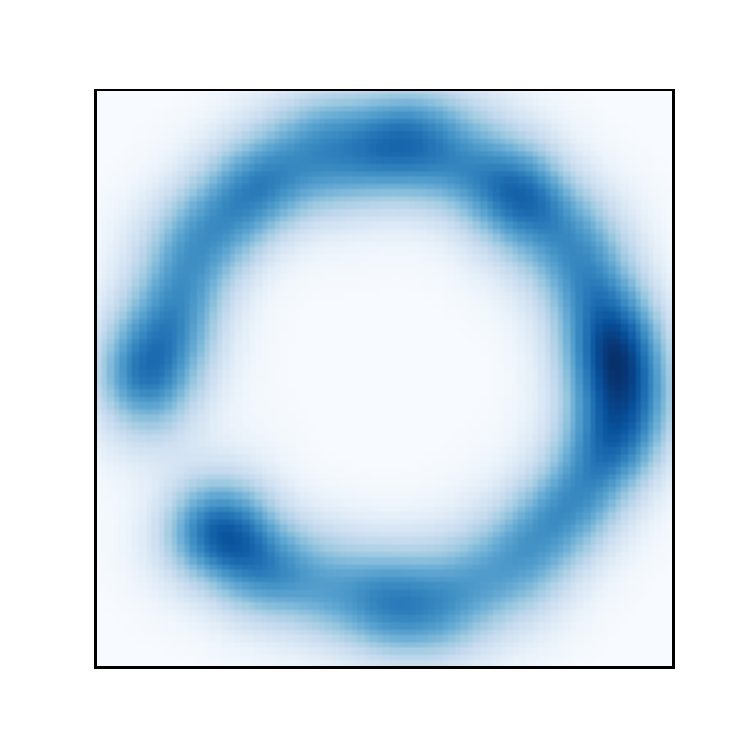} 
\\MMD
&0.0027
&0.0059
&0.0036
\\\hline
\raisebox{1.5cm}{\algname{Wake-sleep}}
&\includegraphics[width=3cm]{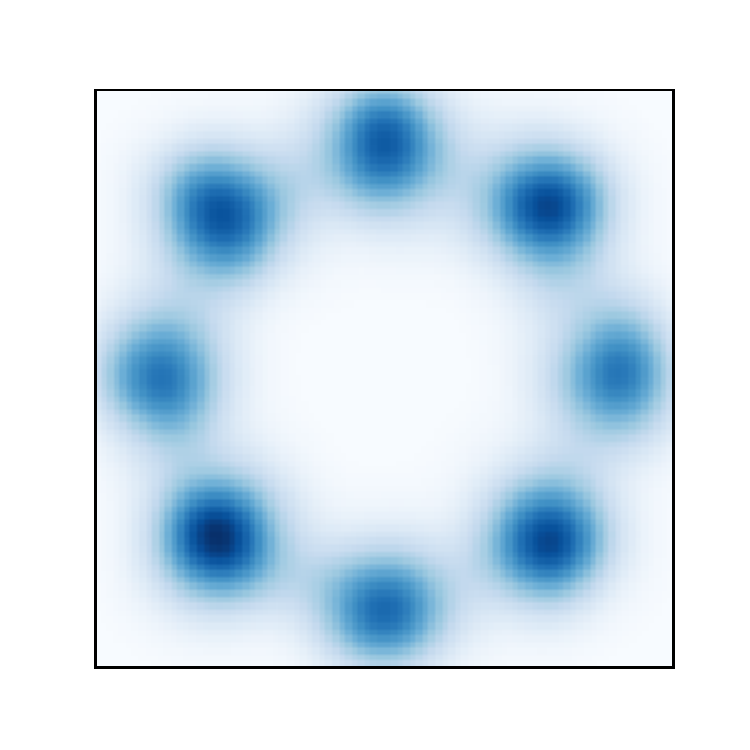} 
&\includegraphics[width=3cm]{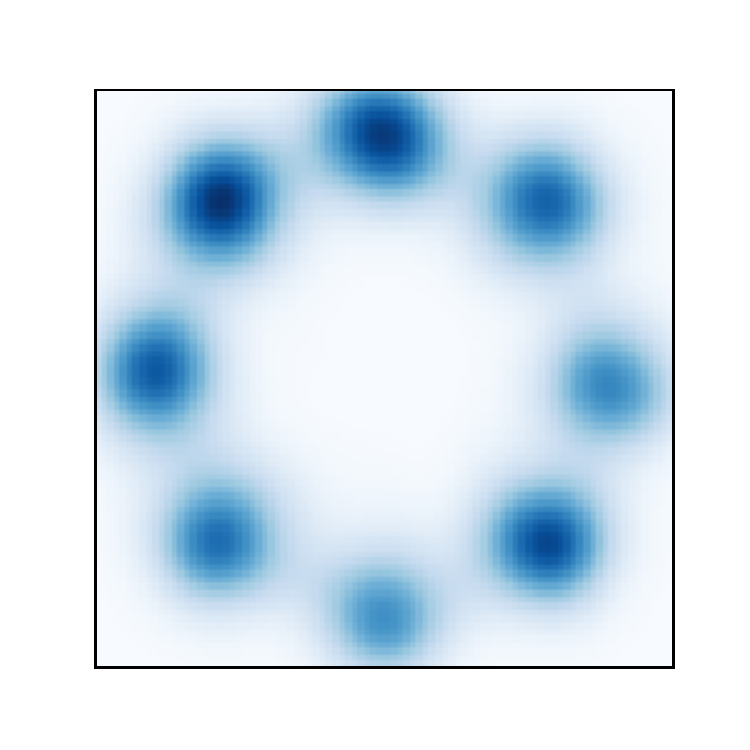} 
&\includegraphics[width=3cm]{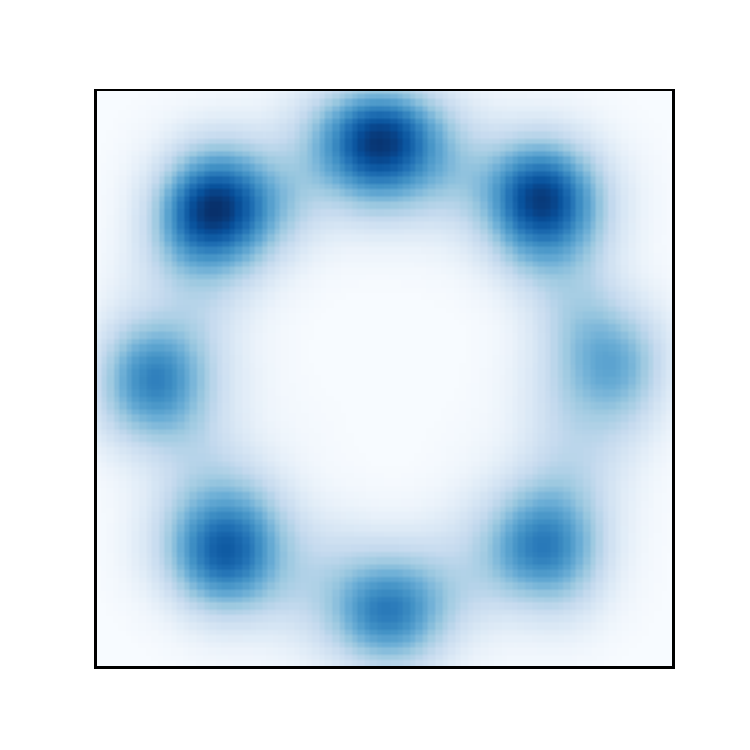} 
\\MMD
&0.0008
&0.0036
&0.0043
\\\hline
\raisebox{1.5cm}{Target}
&\includegraphics[width=3cm]{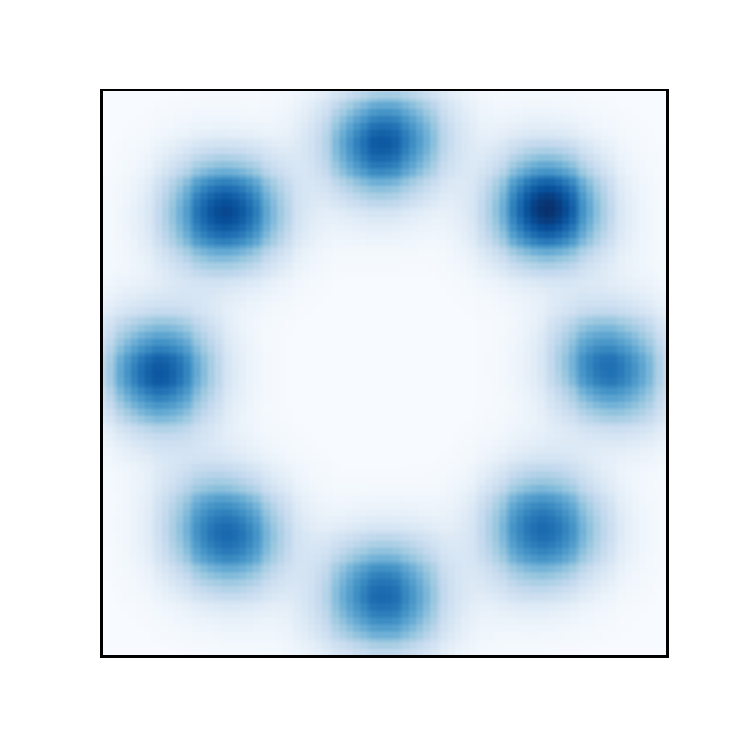} 
&\includegraphics[width=3cm]{figures/continuous/target_kde.pdf} 
&\includegraphics[width=3cm]{figures/continuous/target_kde.pdf} 
\end{tabular}
\caption{KDE of learned marginal distributions with various algorithms and exploration policies and MMD with Gaussian kernel $\exp(-\|\mathbf{x}-\mathbf{y}\|^2)$ estimated using 2560 samples.}
\label{fig:continuous_comparison}
\end{figure}

\end{document}